\let\cite\citep
\definecolor{rowcolor1}{HTML}{FBE4E4} 
\newcommand{\INDSTATE}[1][1]{\STATE\hspace{#1em}}
\newcolumntype{C}{>{\cellcolor{red!10}}c}
\newcommand{\proj}{TSA\xspace}
\def\eqref#1{equation~\ref{#1}}
\def\1{\bm{1}}
\def\rmA{{\mathbf{A}}}
\def\rmX{{\mathbf{X}}}
\def\rmY{{\mathbf{Y}}}
\def\mA{{\bm{A}}}
\def\mI{{\bm{I}}}
\def\mX{{\bm{X}}}
\DeclareMathAlphabet{\mathsfit}{\encodingdefault}{\sfdefault}{m}{sl}
\SetMathAlphabet{\mathsfit}{bold}{\encodingdefault}{\sfdefault}{bx}{n}
\def\gE{{\mathcal{E}}}
\def\gG{{\mathcal{G}}}
\def\gH{{\mathcal{H}}}
\def\gL{{\mathcal{L}}}
\def\gN{{\mathcal{N}}}
\def\gS{{\mathcal{S}}}
\def\gT{{\mathcal{T}}}
\def\gU{{\mathcal{U}}}
\def\gV{{\mathcal{V}}}
\def\gX{{\mathcal{X}}}
\def\gY{{\mathcal{Y}}}
\newcommand{\predy}{\hat{Y}}
\newcommand{\prob}{\mathbb{P}}
\newcommand{\probu}{\mathbb{P}^\mathcal{U}}
\newcommand{\probs}{\mathbb{P}^\mathcal{S}}
\newcommand{\probt}{\mathbb{P}^\mathcal{T}}
\newcommand{\probm}{\mathbb{P}^\mathcal{M}}
\newcommand{\errors}{\varepsilon^\mathcal{S}}
\newcommand{\errort}{\varepsilon^\mathcal{T}}
\newcommand{\node}{\mathcal{V}}
\newcommand{\nodes}{\mathcal{V}^\mathcal{S}}
\newcommand{\nodet}{\mathcal{V}^\mathcal{T}}
\newcommand{\neighbor}{\mathcal{N}}
\newcommand{\rvY}{Y}
\newcommand{\gammau}[1]{\gamma^{\mathcal{U}}_{#1}}
\newcommand{\gammas}[1]{\gamma^{\mathcal{S}}_{#1}}
\newcommand{\gammat}[1]{\gamma^{\mathcal{T}}_{#1}}
\newcommand{\rvW}{W}
\newcommand{\omegau}[1]{\omega^{\mathcal{U}}_{#1}}
\newcommand{\omegas}[1]{\omega^{\mathcal{S}}_{#1}}
\newcommand{\omegat}[1]{\omega^{\mathcal{T}}_{#1}}
\newcommand{\omegam}[1]{\omega^{\mathcal{M}}_{#1}}
\newcommand{\piu}[1]{\pi^{\mathcal{U}}_{#1}}
\newcommand{\pis}[1]{\pi^{\mathcal{S}}_{#1}}
\newcommand{\pit}[1]{\pi^{\mathcal{T}}_{#1}}
\newcommand{\balerrorrate}{\textnormal{BER}^{\mathcal{S}}\infdivx}
\DeclarePairedDelimiter{\abs}{\lvert}{\rvert}
\DeclarePairedDelimiter{\rpar}{(}{)}
\DeclarePairedDelimiter{\spar}{[}{]}
\DeclarePairedDelimiter{\cbrace}{\{}{\}}
\DeclarePairedDelimiterX{\infdivx}[2]{(}{)}{  #1\;\delimsize\|\;#2}
\DeclarePairedDelimiter{\set}{\{}{\}}
\newcommand{\E}{\mathbb{E}}
\newcommand{\R}{\mathbb{R}}
\newcommand{\mgamma}{\boldsymbol{\gamma}}
\theoremstyle{plain}
\newtheorem{theorem}{Theorem}[section]
\newtheorem{lemma}[theorem]{Lemma}
\theoremstyle{definition}
\newtheorem{definition}[theorem]{Definition}
\theoremstyle{remark}
\renewcommand{\paragraph}[1]{\textbf{#1}~~}
\begin{document}

%

%

\twocolumn[
\aistatstitle{Structural Alignment Improves Graph Test-Time Adaptation}

\aistatsauthor{ Hans Hao-Hsun Hsu\textsuperscript{*} \And Shikun Liu\textsuperscript{*} \And  Han Zhao \And Pan Li}

\aistatsaddress{Georgia Tech \And Georgia Tech \And UIUC \And Georgia Tech} 
]

\begin{abstract}
  Graph-based learning excels at capturing interaction patterns in diverse domains like recommendation, fraud detection, and particle physics. However, its performance often degrades under distribution shifts, especially those altering network connectivity. Current methods to address these shifts typically require retraining with the source dataset, which is often infeasible due to computational or privacy limitations. We introduce Test-Time Structural Alignment (TSA), a novel algorithm for Graph Test-Time Adaptation (GTTA) that adapts a pretrained model to align graph structures during inference without the cost of retraining. Grounded in a theoretical understanding of graph data distribution shifts, TSA employs three synergistic strategies: uncertainty-aware neighborhood weighting to accommodate neighbor label distribution shifts, adaptive balancing of self-node and aggregated neighborhood representations based on their signal-to-noise ratio, and decision boundary refinement to correct residual label and feature shifts. Extensive experiments on synthetic and real-world datasets demonstrate TSA's consistent outperformance of both non-graph TTA methods and state-of-the-art GTTA baselines. Code is available at \href{https://github.com/Graph-COM/TSA}{https://github.com/Graph-COM/TSA}.
\end{abstract}

\vspace{-2mm}
\begin{figure}[t]
    \centering  
    \includegraphics[trim={0.6cm 0 0 0},clip, width=0.95\linewidth]{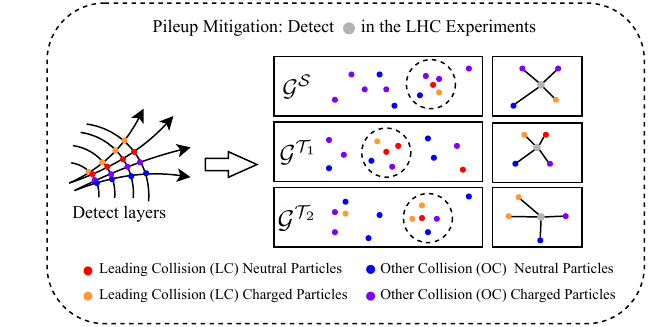} 
    \vspace{-2mm}
    \caption{Example of the distribution shifts of neighborhood information due to different experimental conditions.
    Pileup mitigation aims to detect LC neutral particles using kNN graphs (dashed circles) that leverage nearby particles for inference.
    The model is trained on $\gG^\gS$ but needs to generalize to $\gG^{\gT_1}$ and $\gG^{\gT_2}$. 
    The inferred nodes within the circles are the LC neutral particles, but their neighborhood node label ratios change in $\gG^{\gT_1}$ and $\gG^{\gT_2}$.
    Both cases represent \emph{neighborhood shift}, which this work aims to address.
    A more formal definition of these shifts is discussed in Sec.~\ref{sec:Test Error Analysis}.}
    \label{fig:pileup_example}
    \vspace{-5mm}
\end{figure}

\section{INTRODUCTION}
\label{introduction}
\vspace{-2mm}

\begin{figure*}[t]
\centering

\includegraphics[trim={0.4cm 0 0 0},clip, width=0.95\textwidth]{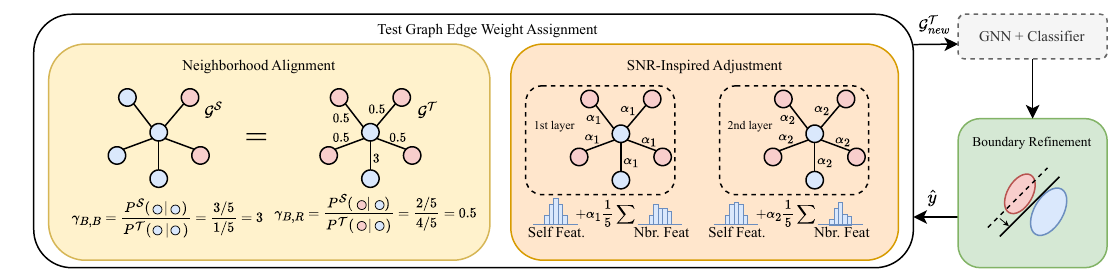}
\vspace{-2mm}
\caption{TSA utilizes neighborhood alignment to address neighborhood shift and SNR-inspired adjustment to mitigate SNR shift. It further adjusts the decision boundary to get refined predictions $\hat{y}$. 
The refined soft pseudo-labels are used to estimate the parameter $\mgamma$ for neighborhood alignment and to optimize $\alpha$ for combining self representations with neighborhood-aggregated representations.
}
\label{fig:tsa}
\vspace{-4mm}
\end{figure*}

Graph-based methods have become indispensable in handling structured data across a wide range of real-world applications \cite{duvenaud2015convolutional, bronstein2017geometric, zhang2019deep, stokes2020deep}, achieving significant success when training and testing data originate from similar distributions.
However, these methods struggle to generalize to test-time data in a different domain, where variations in time, location, or experimental conditions result in distinct graph connection patterns. 
Although some literature on graph domain adaptation (GDA)~\cite{wu2020unsupervised,you2023graph,zhu2021shift,liu2023structural} seeks to bridge these gaps by aligning labeled source distributions with target distributions, such approaches are often infeasible in practice due to the cost of retraining.

Real-world applications may operate under constraints such as lightweight computation, limited storage, and strict privacy requirements, which make reprocessing of source data generally infeasible~\cite{wang2021gnnadvisor, wu2021fedgnn}. For example, Fig.~\ref{fig:pileup_example} illustrates pileup mitigation based on graph learning at the Large Hadron Collider (LHC)~\cite{shlomi2020graph, highfield2008large,miao2024locality}, where the connections between particles differ as experimental conditions change. 
To address this, the detector is expected to be capable of adapting on-the-fly~\cite{li2023semi, komiske2017pileup}, potentially on embedded hardware.
Similarly, in fraud detection for financial networks~\cite{clements2020sequential, wang2021review}, privacy-sensitive raw training data may not be accessed during testing, while transaction patterns may change between different regions~\cite{wang2019semi, dou2020enhancing}.
These scenarios necessitate graph test-time adaptation (GTTA), which enables models to adjust to new test domains without 
incurring the overhead of full retraining.

Accurate node classification in graph-structured data relies heavily on effectively leveraging neighborhood information. Graph neural networks (GNNs) \cite{kipf2016semi, velivckovic2017graph, hamilton2017inductive, hamilton2020graph} have achieved significant success across a wide range of applications by utilizing this neighborhood context, yet they remain sensitive to distribution shifts in node neighborhoods \cite{gui2022good, ji2023drugood}. In the above LHC example illustrated by Fig.~\ref{fig:pileup_example}, distribution shifts lead to distinct neighborhood connections among center particles in the same class, causing a neighborhood shift. 
Existing methods for test-time adaptation (TTA) often focus on adjusting only the classifier’s final layer \cite{iwasawa2021test}, modifying its outputs \cite{boudiaf2022parameter}, or adapting normalization statistics \cite{wang2020tent, niu2023towards}, primarily targeting image-based tasks with independently labeled data points. Consequently, they may not adequately address neighborhood shifts for node classification in graphs. While recent works on GTTA attempt to address the structural changes using graph augmentation \cite{jin2022empowering}, self-supervised learning \cite{zhang2024fully, mao2024source}, these approaches mainly rely on augumentation heuristics and homophily consistency, failing to fully address changes in neighboring node connections in principle.

In this work, we propose a model-agnostic framework with test-time structural alignment, named \proj, as illustrated in Fig.~\ref{fig:tsa}, and supported by theoretical analysis. We begin by examining the generalization gap in GTTA for a pretrained source model. Our theoretical findings, further validated by empirical studies, 
indicate the importance of aligning the 
neighborhood distributions
across the source and target domains.
%
%
%
To this end, \proj employs a neighborhood weighting strategy to recalibrate the influence of neighboring nodes during message aggregation.
This strategy is a form of first-order alignment, as it corrects for shifts in the neighborhood connection expectation.
However, the weight assignment relies on the knowledge of node labels, which are unavailable in the target graph. Assigning weights based on pseudo labels can
degrade performance. To mitigate this, \proj introduces an uncertainty-aware assignment strategy that aligns only node pairs with more reliable test-time pseudo labels.

Additionally, \proj goes beyond aligning first-order statistics by optimizing the test-time combination of self-node representations and neighborhood-aggregated representations based on their second-order signal-to-noise ratio (SNR). 
Our experiments show that neighborhood-aggregated representations tend to have higher SNR and are better denoised when the neighbor set size and the GNN layer depth increase.
Under both conditions, \proj adjusts the combination by assigning greater weight to neighborhood-aggregated representations than to self-node representations to enhance model performance.


Lastly, \proj integrates non-graph TTA methods to refine the decision boundary, mitigating mismatches caused by additional label and feature shifts once neighborhood shifts have been addressed.

We conduct extensive experiments on synthetic and four real-world datasets, including those from high-energy physics and citation networks, with multiple GNN backbones, to demonstrate the effectiveness of TSA. TSA outperforms non-graph TTA baselines by up to 21.25\% on synthetic datasets and surpasses all non-graph baselines. Compared to existing GTTA baselines, TSA achieves an average improvement of 10\% on all real-world datasets.

\vspace{-2mm}

\section{PRELIMINARIES}
\vspace{-2mm}
\subsection{Notations and Problem Setup}
\vspace{-2mm}

We use upper-case letters, such as $Y$ to represent random variables, and lower-case letters, such as $y$ to represent their realizations.
Calligraphic letters, such as $\gY$ denote the domain of random variables.
We use bold capital letters such as $\rmY$ to represent the vectorized counterparts, i.e., collections of random variables.
The probability of a random variable $Y$ for a realization is expressed as $\prob(Y=y)$.


\paragraph{Graph Neural Networks (GNNs).} 
We let $\gG = (\gV,\gE)$ denote an undirected and unweighted graph with the symmetric adjacency matrix $\mA\in\R^{N\times N}$ and the node feature matrix $\mX=[x_1,\dots,x_N]^T$.
GNNs utilize neighborhood information by encoding $\mA$ and $\mX$ into node representations $\{h_{v}^{(k)}, v \in \neighbor_u\}$.
With $h_u^{(1)}=x_u$, the message passing in standard GNNs for node $u$ and each layer $k \in [L] \coloneqq \{1,\dots,L\}$ can be written as
\begin{equation}
    h_u^{(k+1)} = \text{UPT}(h_u^{(k)}, \text{AGG}(\{h_{v}^{(k)}, v \in \neighbor_u\})) \label{eq:gnn}
\end{equation}
where $\neighbor_u$ denotes the set of neighbors of node $u$, and $|\neighbor_u|$ represents the node degree $d_u$.
The AGG function aggregates messages
from the neighbors, and the UPT function updates the node
representations. 

\paragraph{Test-Time Adaptation (TTA).}
Assume the model consists of a feature encoder $\phi:\gX\rightarrow \gH$ and a classifier $g: \gH \rightarrow \gY$ and each domain $\gU \in \{\gS,\gT\}$ consists of a joint feature and label distribution $\probu(X,Y)$.
In TTA, the model is first trained on a labeled source dataset $\{(x_i,y_i)\}_{i=1}^{N}$, IID sampled from the source domain.
In the target dataset $\{x_i\}_{i=1}^{M}$, the objective is to minimize the test error $\errort(g\circ\phi)=\probt(g(\phi(X))\neq Y)$ 
given the source-pretrained model.



\paragraph{Graph Test-Time Adaptation (GTTA).} 
Assume the model consists of a GNN feature encoder $\phi:\gX\rightarrow \gH$ and a classifier $g: \gH \rightarrow \gY$. The model is trained on source graph $\gG^{\gS}=(\gV^{\gS}, \gE^{\gS})$ with node labels $\mathbf{y}^\gS$ and the goal is to enhance model performance on a test graph $\gG^{\gT}=(\gV^{\gT}, \gE^{\gT})$ with distribution shifts that will be defined in Sec.~\ref{sec:shift}. For node classification tasks, we aim to minimize the test error $\errort(g\circ\phi)=\probt(g(\phi(X_u, \rmA))\neq Y_u)$ 
given the source-pretrained model.

\begin{figure*}[t]
\centering
\begin{tabular}{c|ccc}
(a) Reliability of $\gamma$ 
&(b) Source &(c) Target with Stru. Shift
& (d) Nbr. Align   \\
\includegraphics[width=0.22\textwidth]{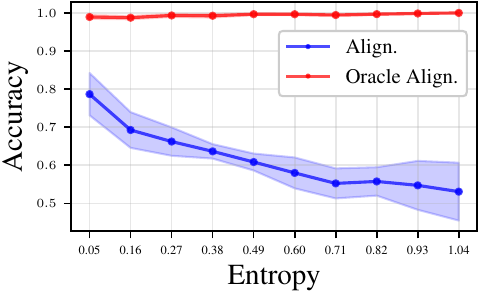} &
\includegraphics[width=0.21\textwidth]{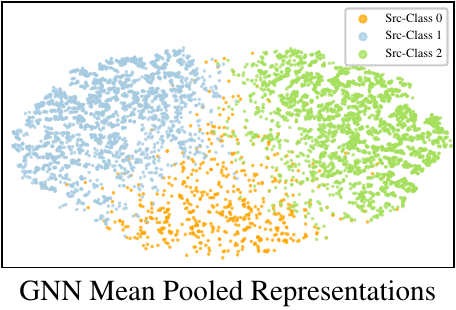} &
\includegraphics[width=0.21\textwidth]{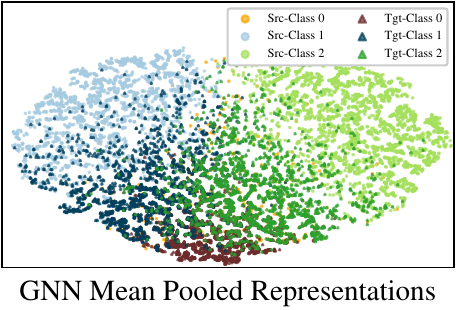} &
\includegraphics[width=0.21\textwidth]{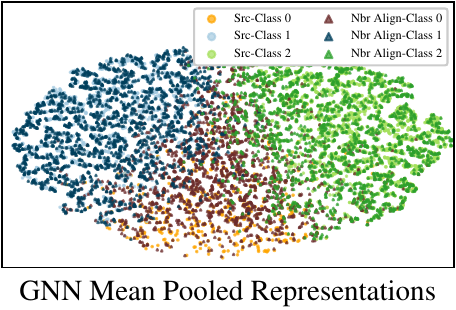} \\

\includegraphics[width=0.22\textwidth]{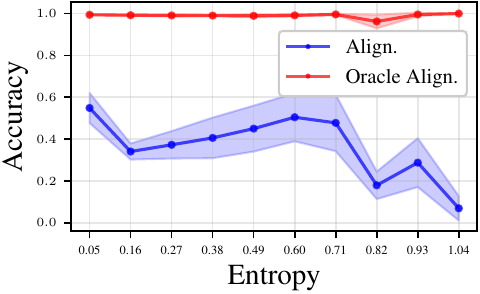} &
\includegraphics[width=0.21\textwidth]{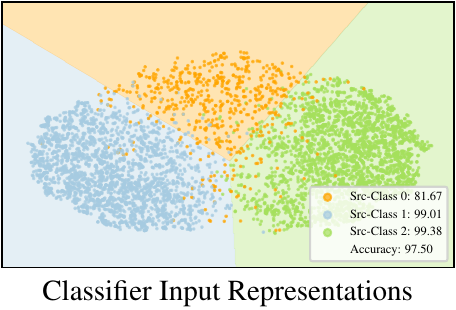} &
\includegraphics[width=0.21\textwidth]{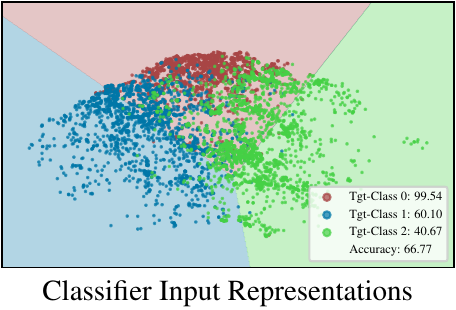} &
\includegraphics[width=0.21\textwidth]{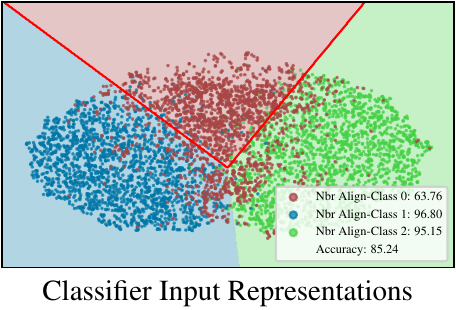} \\
\end{tabular} 
\vspace{-2mm}
\caption{(a) Comparison of neighborhood alignment with $\mgamma$ from model prediction and Oracle on the CSBM graphs~\cite{deshpande2018contextual}. The top (or bottom) subfigures represent the assignment under neighbor shift (or neighbor shift plus label shift, respectively). Nodes are grouped by the entropy of their soft pseudo labels and the y axis shows the accuracy after assigning $\mgamma$. Ideally, a correct assignment (red) would lead to near-perfect accuracy.
However, the assignment based on pseudo labels is far from optimal (blue).
Figure (b) to (d) present t-SNE visualizations of node representations during GTTA.
The model is trained on the source domain in (b) (CSBM with label distribution $[0.1, 0.3, 0.6]$) and evaluated on the target domain in (c) (CSBM with label distribution $[0.3, 0.3, 0.3]$). Figure (d) illustrates the qualitative result of the neighborhood alignment.
Despite well-aligned neighborhood representations, a mismatched decision boundary (red) leads to class 0 misclassifications.
Node colors represent the ground-truth labels. The top subfigures show the output given by the GNN encoder, while the bottom subfigures show the classifier decision boundaries. 
}
\label{fig:pa_vis}
\vspace{-3mm}
\end{figure*}
\vspace{-3mm}

\subsection{Related Works}

\paragraph{Test-time Adaptation. }
Test-time training (TTT) approaches necessitate modifications to both the training pipeline and network architecture to accommodate customized self-supervised tasks, such as rotation prediction~\cite{sun2020test}, contrastive learning~\cite{liu2021ttt, bartler2022mt3}, and clustering~\cite{hakim2023clust3}. These requirements, however, can limit their broader applicability. In contrast, our proposed setup aligns with the test-time adaptation (TTA) category, which does not alter the original training pipeline~\cite{liang2024comprehensive}. One prominent TTA method, Tent~\cite{wang2020tent}, adapts batch normalization parameters by minimizing entropy. This approach has inspired subsequent research focusing on normalization layer adaptation~\cite{gong2022note, zhao2023delta, lim2023ttn, niu2023towards}. Other TTA strategies directly modify the classifier's predictions. For instance, LAME~\cite{boudiaf2022parameter} refines the model's soft predictions by applying regularization in the feature space. T3A~\cite{iwasawa2021test} classifies test data based on its distance to pseudo-prototypes derived from pseudo-labels. Building upon T3A, TAST~\cite{jang2022test} and PROGRAM~\cite{sunprogram} aim to construct more reliable prototype graphs. However, the effectiveness of these methods can be compromised when significant domain shifts occur~\cite{mirza2023actmad, Kang_2023_ICML}. 
To address larger domain shifts, some methods leverage stored, lightweight source statistics, such as label distributions, to perform test-time alignment~\cite{mirza2023actmad, niu2024test, kojima2022robustvit, eastwood2021source, adachi2024test, zhou2023ods}. ActMAD~\cite{mirza2023actmad}, for example, aligns target feature statistics with stored source statistics across multiple layers. More recently, FOA~\cite{niu2024test} focuses on aligning the statistics of classification tokens in vision transformers. Note that all the aforementioned methods are primarily designed for image-based applications and may not adequately address shifts in the neighborhood information characteristic of graphs.

\paragraph{Graph Test-time Adaptation.} Studies on GTTA are in two categories - node and graph classification.
Graph classification problems can treat each graph as an i.i.d.\ input, allowing more direct extension of image-based TTA techniques to graphs \cite{chen2022graphtta, wang2022test}.
Our work focuses on node classification \cite{jin2022empowering, zhang2024fully, mao2024source, ju2024graphpatcher}. 
GTrans \cite{jin2022empowering} proposes to augment the target graph at test time by optimizing a contrastive loss by generating positive views from DropEdge \cite{rong2019dropedge} and negative samples from the features of the shuffled nodes \cite{velivckovic2018deep}. 
HomoTTT~\cite{zhang2024fully} combines self-supervised learning with homophily-based DropEdge and prediction selection.
SOGA \cite{mao2024source} designs a self-supervised loss that relies on mutual information maximization and the homophily assumption.
However, these works address the symptoms of the distribution shift (i.e., poor representations) by transforming the test graph or model parameters using a self-supervised contrastive loss, which relies on the choice of augmentation heuristics. In contrast, TSA is the first method that directly targets the root cause (the statistical change in connectivity patterns) to address test-time graph structure shift in a principled way, grounded in formal error analysis.

\vspace{-3mm}
\section{TEST ERROR ANALYSIS}
\label{sec:Test Error Analysis}
\vspace{-3mm}

In this section, we characterize the generalization error between the source and target graphs and explicitly attribute it to different types of shifts in graph data, which further motivate our \proj algorithm. 

\subsection{Distribution Shifts on Graphs} 
\label{sec:shift}
Distribution shifts on graphs have been formally studied in previous GDA works \cite{wu2020unsupervised, you2023graph, zhu2021shift, wu2022handling, liao2021information, zhu2023explaining}.  
Following their definitions, we categorize shifts in graphs into two types: feature shift and structure shift. For simplicity, our analysis is based on a data generation process: $\rmX \leftarrow \rmY \rightarrow \rmA$,  
where the graph structure and node features are both conditioned on node labels.

\begin{definition}
\label{def:attrshift}
(Feature Shift). Assume node features $x_u, u \in \gV$ are i.i.d sampled given labels $y_u$, then we have $\prob(\rmX|\rmY)=\prod_{u\in\gV}\prob(X_u|Y_u)$. We then define feature shift as $\probs(X_u|Y_u)\neq \probt(X_u|Y_u)$.
\end{definition}

\begin{definition}
\label{def:strucshift}
(Structure Shift). As the graph structure involves the connecting pattern between nodes including their labels, we consider the joint distribution of the adjacency matrix and labels $\prob(\rmA,\rmY)$, where \emph{structure shift}, denoted by $\probs(\rmA,\rmY) \neq \probt(\rmA,\rmY)$, can be decomposed into \emph{conditional structure shift (CSS)} $\probs(\rmA|\rmY) \neq \probt(\rmA|\rmY)$ and \emph{label shift (LS)} $\probs(\rmY) \neq \probt(\rmY)$. Here, CSS  reflects the shift in neighborhood information (neighborhood shift) in Fig.~\ref{fig:pileup_example}.

\end{definition}



\subsection{Theoretical Analysis}

Let $\errors(g \circ \phi)$ denote the error of a pretrained GNN on the source domain and $\errort(g \circ \phi)$ the error of the model when applied to a test graph in the target domain.
Inspired by \citet{tachet2020domain}, we provide an upper bound on the error gap between the source and target domains, showing how a pretrained GNN (e.g., ERM) can be influenced. 
We denote the \emph{balanced error rate} of a pretrained node predictor $\predy_u$ on the source domain as
        $\balerrorrate{\predy_u}{\rvY_u} \coloneq \max_{j \in \mathcal{Y}} \probs(\predy_u\neq\rvY_u | \rvY_u=j).$

\begin{restatable}[Error Decomposition Theorem (Informal)]{theorem}{decomposeerror}
\label{theory:decomposeerror}
Suppose $\gG^{\gS}$ and $\gG^{\gT}$, we can decouple both graphs into independent ego-networks (center nodes and 1-hop neighbors).
For any classifier $g$ with a GNN encoder $\phi$ in node classification tasks, we have the following upper bound on the error gap between source and target under feature shift and structure shift: 
{\small
\begin{align*}
&\abs{\errors(g \circ \phi) - \errort(g \circ \phi)} \\ 
&\leq   
\balerrorrate{\predy_u}{\rvY_u}\cdot
    \cbrace{\underbrace{
    \rpar{2\text{TV}(\probs(\rvY_u), \probt(\rvY_u)}
    }_{\text{Label Shift}}
   \\
    &+\underbrace{\E_{Y_u}^\gT\spar{\max_{k\in \gY} \abs{1-\frac{\probt(Y_v=k|Y_u,v\in \neighbor_u)}{\probs(Y_v=k|Y_u,v\in \neighbor_u)}}}  }_{\textbf{Neighborhood shift}}}
    + \underbrace{\Delta_{CE}}_{\text{Feature shift}}
\end{align*}
}
\normalsize
where 
    $TV(\probs(\rvY_u), \probt(\rvY_u))$ 
    is the total variation distance between the source and target label distributions. 
    and 
    $\Delta_{CE}$ is the error gap that exists if and only if feature shift exists.
\end{restatable}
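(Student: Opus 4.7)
The plan is to follow the generalized label-shift framework of \citet{tachet2020domain}, specializing it to the ego-network setting that the theorem assumes. Under the decoupling into independent ego-networks, each node contributes a tuple $(X_u, \{X_v\}_{v\in\neighbor_u}, \rmA, Y_u, \{Y_v\}_{v\in\neighbor_u})$ that is i.i.d.\ across ego-networks, so both $\errors(g\circ\phi)$ and $\errort(g\circ\phi)$ become expectations of the indicator $\mathbf{1}\{g(\phi(X_u,\rmA))\neq Y_u\}$ under the respective ego-network distributions. This reduces the problem to standard change-of-measure manipulations on two joint measures over the same sample space.

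First, I would insert an intermediate distribution that keeps all source-side class-conditionals $\probs(X_u,\{X_v\},\rmA\mid Y_u,\{Y_v\})$ but replaces the label marginal by $\probt(Y_u)$, and apply the triangle inequality to split $\abs{\errors(g\circ\phi)-\errort(g\circ\phi)}$ into a label-shift piece and a conditional-mismatch piece. The label-shift piece follows the classical argument: writing the error class-wise as $\sum_j \probs(\predy_u\neq Y_u\mid Y_u=j)\cdot\prob(Y_u=j)$, bounding each conditional error uniformly by the worst-class value $\balerrorrate{\predy_u}{\rvY_u}$, and recognizing $\sum_j\abs{\probs(Y_u=j)-\probt(Y_u=j)}=2\,\text{TV}(\probs(\rvY_u),\probt(\rvY_u))$.

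Second, for the conditional residual, I would use the generative assumption $\rmX\leftarrow\rmY\rightarrow\rmA$ to factor the joint conditional as $\prob(X_u\mid Y_u)\prod_{v\in\neighbor_u}\prob(X_v\mid Y_v)\cdot\prob(\{Y_v\}_{v\in\neighbor_u}\mid Y_u)$ and insert a second intermediate distribution that matches source feature conditionals but target neighbor-label conditionals. Writing the resulting gap via the density-ratio identity $\probs-\probt = (1-\probt/\probs)\probs$ applied to the neighbor-label conditionals, taking the class-wise worst-case ratio $\max_{k\in\gY}\abs{1-\probt(Y_v=k\mid Y_u,v\in\neighbor_u)/\probs(Y_v=k\mid Y_u,v\in\neighbor_u)}$, and averaging under $\probt(Y_u)$ gives exactly the displayed neighborhood-shift factor. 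Any remaining feature mismatch is collected into $\Delta_{CE}$, which by construction vanishes iff $\probs(X_u\mid Y_u)=\probt(X_u\mid Y_u)$ for all classes. Multiplying the combined label- and neighborhood-shift pieces by $\balerrorrate{\predy_u}{\rvY_u}$, the common uniform upper bound on every class-conditional error that appears in each change-of-measure step, yields the stated inequality.

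The hardest step is the neighborhood-shift reduction: the GNN's prediction depends jointly on the whole neighborhood label vector $\{Y_v\}_{v\in\neighbor_u}$ through the aggregation function, so the density ratio cannot be taken coordinate-by-coordinate naively. I plan to exploit the fact that neighbor labels are drawn independently given $Y_u$, consistent with the CSBM-style generative model used throughout the paper, so that the joint ratio factorizes as a product of per-neighbor ratios and can be uniformly bounded by the single-coordinate worst-class ratio before being pulled outside the expectation. A secondary technical care is to formalize the "decouple into independent ego-networks" clause so that the expectation passes cleanly through the change of measure; the bound should then be interpreted as holding in expectation over i.i.d.\ ego-network draws, which is also how the source and target errors are defined in the statement.
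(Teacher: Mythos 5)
Your overall route is the same as the paper's: a two-stage change of measure (first swapping the label marginal, then the neighbor-label conditional), with each residual class-conditional error bounded uniformly by $\balerrorrate{\predy_u}{\rvY_u}$, the label-shift piece collapsed to $2\,\mathrm{TV}$ via H\"older, the neighbor-label discrepancy rewritten through the ratio identity $\abs{\probs-\probt}=\probs\abs{1-\probt/\probs}$, and the leftover prediction-conditional mismatch absorbed into $\Delta_{CE}$. The paper implements exactly this via a small mixture lemma (its Lemma A.1 with $\alpha_j=1$, $\beta_j=0$ is precisely your "insert an intermediate distribution" step), so on the label-shift and feature-shift components your proposal is correct and essentially identical.

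The one place your plan diverges is the step you yourself flag as hardest, and there it does not go through as written. You propose to keep the prediction's dependence on the full neighbor-label vector $\{Y_v\}_{v\in\neighbor_u}$ and to use conditional independence so that the joint density ratio factorizes as $\prod_{v\in\neighbor_u}\probt(Y_v=k_v\mid Y_u)/\probs(Y_v=k_v\mid Y_u)$, claiming this product is "uniformly bounded by the single-coordinate worst-class ratio." That is false: a product of $d_u$ per-neighbor ratios each close to $1$ can drift arbitrarily far from $1$ as $d_u$ grows (e.g.\ $1.1^{10}\approx 2.6$), so $\abs{1-\prod_v r_v}$ is not controlled by $\max_k\abs{1-r_k}$; at best you would get a degree-dependent blow-up, not the displayed bound. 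The paper sidesteps this entirely by an explicit modeling assumption rather than a factorization argument: it posits that the GNN's prediction depends on the neighborhood only through the expected neighbor-label distribution, so the law of total probability is taken over a \emph{single} categorical variable $Y_v$ ("the neighborhood label ratio") and the single-coordinate worst-class ratio appears directly. This assumption is the reason the theorem is labelled informal. To repair your argument you should either adopt the same collapse of the neighborhood to a single summary variable, or accept a bound that scales with the neighborhood size (e.g.\ via subadditivity of total variation over product measures), which would be a strictly weaker statement than the one claimed.
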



Our bound 
highlights the impact of conditional structure shift, label shift, and feature shift on generalization to the target graph. 

\section{TEST-TIME STRUCTURAL ALIGNMENT}
\label{sec:ttsa}

\vspace{-3mm}

Motivated by our theoretical analysis, we propose Test-Time Structural Alignment (\proj) to address GTTA in a principled way. 
To address structure shift, \proj first conducts neighborhood alignment via weighted aggregation and then properly balances self and neighboring node information combination based on the SNRs of these representations. Lastly, as a generic approach for structure shift, \proj can be combined with non-graph TTA methods to refine the decision boundary. 



\subsection{Neighborhood Alignment} 
\label{subsec:1stalign}




The conditional structure shift (CSS) alters the node label ratios in the aggregated neighborhood information, causing a shift in the GNN-encoded representations. Based on Theorem \ref{theory:decomposeerror}, we align the target neighborhood distribution with the source domain, leveraging the fact that the pre-trained model is optimized for the source distribution. Specifically, the neighborhood distribution determines the ratio of messages passed from a neighboring class $j$ to the center class $i$. To adjust for distributional differences, this ratio can be rescaled by assigning weights to edges from class $j$ to class $i$, effectively acting as an up/down-sampling mechanism during message aggregation. To ensure that message aggregation in the target domain aligns with the expected behavior in the source domain, \proj incorporates the following adjustment:



\begin{definition}
Let $\probt(Y_v=j|Y_u=i,v\in \neighbor_u)>0, \forall i,j \in \gY$, we have $\mgamma \in \R^{|\gY|\times|\gY|}$ as:
\begin{equation} 
    [\mgamma]_{i,j} = \frac{\probs(Y_v=j|Y_u=i,v\in \neighbor_u)}{\probt(Y_v=j|Y_u=i,v\in \neighbor_u)}, \: \forall i,j \in \gY
\label{eq:gamma_ratio}
\end{equation}
\end{definition}
\vspace{-2mm}



To estimate $\mgamma$, we estimate $\probt(Y_v=j|Y_u=i,v\in \neighbor_u)$ based on target pseudo labels. $\probs(Y_v=j|Y_u=i,v\in \neighbor_u)$ can be computed directly from the source graph during model training and added as extra model parameters for the test time direct usage. During test time, we weight the message with a properly assigned $[\mgamma]_{i,j}$ computed based on the ratio of the two quantities. This idea aligns with the general principle of test-time alignment~\cite{mirza2023actmad, niu2024test, kojima2022robustvit}, but is novel in the aspect of dealing with the structure shift for graph data. Assigning edge weights to reduce structure shifts was also studied in training-time GDA by PairAlign~\cite{liu2024pairwise}. 
However, PairAlign is inapplicable in the GTTA setting as it assumes concurrent access to both the labeled source graph and the unlabeled target graph to train a model from scratch. This allows the model to learn and adjust to the weighted messages simultaneously, leading to better coordination between the model and the weights.
In contrast, TSA addresses the more restrictive and practical inference-time GTTA problem. The model is pre-trained and frozen.
To compensate for this inability to co-train, TSA needs a more reliable assignment strategy specified in the following.



\paragraph{Reliable Assignment of $\mgamma$.}  
The ratio $\mgamma$ should be assigned to edge weights based on the label pairs of the central and neighboring nodes.
In training-time GDA, this assignment is straightforward, as it relies on the ground-truth labels of the source graph. However, in GTTA, the node labeling information is unavailable. A naive approach is to use target pseudo labels, i.e., $[\mgamma]_{i,j}$ for the message from a node with pseudo label $j$ to a node with pseudo label $i$, but this often results in significant mismatches. \proj addresses this by quantifying the uncertainty of target pseudo labels~\cite{zhang2019bayesian,stadler2021graph,hsu2022makes, hsu2022graph}. In particular, TSA assigns $[\mgamma]_{i,j}$ only to node pairs $v\rightarrow u$ where both of their soft target pseudo labels $\hat{y}$ have low entropy $H(\hat{y})=-\sum_{i\in \gY}[\hat{y}]_i\ln([\hat{y}]_i)\leq\rho_1 \cdot \ln ( |\gY|)$. Here, $\rho_1$ is a hyperparameter and $ \ln( |\gY|)$ is the maximum entropy for a $|\gY|$ class prediction. In Fig.~\ref{fig:pa_vis} (a), nodes with low-entropy soft predictions are more reliable, resulting in higher accuracy after the assignment of $\mgamma$.

\begin{figure*}[t]
\centering
\begin{tabular}{cc|c}
(a) SNR Source (CN) &(b) SNR Target (US) 
& (c) SNR $\alpha$ CN$\rightarrow$US  \\
\includegraphics[width=0.23\textwidth]{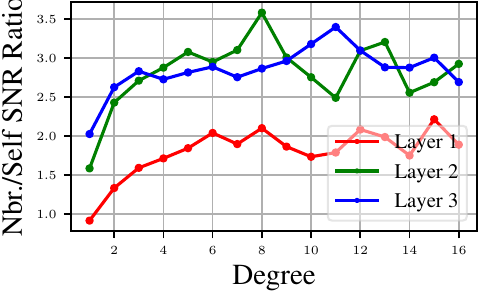}
&\includegraphics[width=0.23\textwidth]{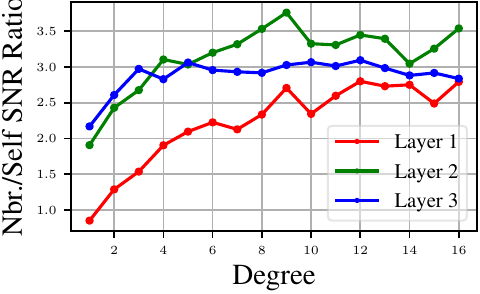}
&\includegraphics[width=0.23\textwidth]{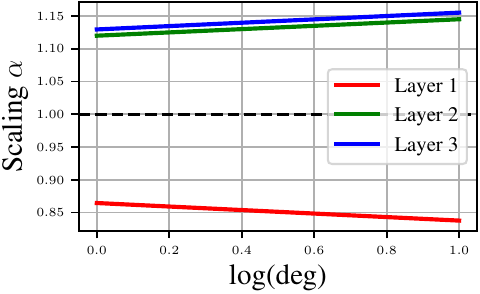} \\

\end{tabular}
\vspace{-4mm}
\caption{(a) and (b) compare the SNR of neighborhood-aggregated representations to that of self-node representations across node degrees and GNN layers, based on the ratio of the two quantities. A higher ratio indicates that the neighborhood-aggregated representations exhibit higher SNR.
(c)~Analysis of the SNR adjustment $\alpha$ across different layers and node degrees.
Details are discussed in Sec.~\ref{subsec:result_analysis}.
}
\label{fig:snr}
\vspace{-2mm}
\end{figure*}


\vspace{-2mm}
\subsection{SNR-Inspired Adjustment}
\vspace{-2mm}
\label{subsec:snr}
Going beyond aligning the first-order statistics, we further optimize the signal-to-noise ratio (SNR) of node representations to enhance performance. The SNR is defined as the ratio of inter-class representation variance to intra-class representation variance \cite{yuan2019signal}, formally expressed as $\text{SNR} \coloneqq {\sigma_{inter}^2} /{\sigma_{intra}^2}$ with

\vspace{-6mm}
{\small
\begin{align*} 
    \sigma_{inter}^2 = \sum_{i\in\gY} \frac{n_i \left\| \mu_i - \mu_* \right\|_2^2} {|\gY|}, 
    \quad 
    \sigma_{intra}^2= \sum_{i\in \gY} \sum_{h \in \mathcal{H}_i}  \frac{\left\| h - \mu_i \right\|_2^2 }{|\gY|} 
\label{eq:snr}
\end{align*}
}
\vspace{-6mm}

where $n_i$ is the number of samples in class $i$, $\mu_i$ is the centroid representation of class $i$, $\mu_*$ is the overall centroid of all nodes, and $\mathcal{H}_i$ represents the set of representations in class $i$. 
A higher inter-class variance $\sigma_{inter}^2$ indicates informative and well-separated representations, while a higher $\sigma_{intra}^2$ indicates a higher noise in the representations.

Optimizing SNR complements the neighborhood alignment approach. Even if neighborhood label distributions are perfectly aligned, variations in neighbor set sizes between the source and target graphs can impact the SNR of aggregated neighboring node representations.
To compare the SNR of neighborhood-aggregated representations to that of self-node representations, we compute $\text{SNR ratio} = \frac{\text{Nbr. Aggr. SNR}}{\text{Self-Node SNR}}$ which exhibits different trends as a function of node degree (aka neighbor set size) and layer depth, as shown in Fig.~\ref{fig:snr} (a) and (b). 
In both figures, deeper layers and higher degree values generally lead to a higher SNR ratio, indicating aggregated representations are better denoised. However, the rate and shape of SNR ratio growth vary across source and target domains. Consequently, the combination of these aggregated representations with self-node representations in a typical GNN pipeline (Eq.~\ref{eq:gnn}) should be properly adjusted as well in test-time adaptation. To implement the SNR-inspired adjustment, we introduce a parameter to perform the weighted combination of self-node representations and neighborhood-aggregated representations at each layer, adapting to node degrees as follows:

\begin{definition}
Let $\tilde{d_u}=\left(\frac{\ln(d_u +1)}{\ln(\max_{v\in\node} d_v+1)}\right)$ denote log-normalized degree of node $u$ and let $\text{MLP}^{(k)}$ and $b^{(k)}$ to be learnable parameters for adjusting $k$-th layer combination, we define the weights for combination $\alpha \in \R^{L}$ as:

\vspace{-3mm}
\begin{equation}
    [\alpha]_{k} =
\sigma(\text{MLP}^{(k)}(\tilde{d_u}))-0.5+ b^{(k)}, \: \forall k \in [L]
\label{eq:alpha_ratio}
\end{equation}
\end{definition}
\vspace{-3mm}

where $\sigma$ is the sigmoid function. 
During initialization $[\alpha]_{k}$ is set to $1$.
Degree values are taken in the logarithmic domain to handle their often long-tailed distribution. 

Together, $[\alpha]_{k}\cdot[\mgamma]_{i,j}$ is used to reweight the GNN messages, adjusting the influence from the node with a highly certain pseudo label $j$ to the node with a highly certain pseudo label $i$.


 \textbf{Remark.} Neighborhood alignment alone does not address potential shifts in SNR. This is because the alignment approach, inspired by Thm.~\ref{theory:decomposeerror}, focuses solely on population-level performance, whereas SNR also incorporates the noise (variance) induced by empirical sampling from the population. Thus, these two aspects complement each other.

\begin{algorithm}[t]
   \caption{Test-Time Structural Alignment (TSA)}
   \label{alg:example}
\begin{algorithmic}[1]
   \STATE {\bfseries Input:} A GNN $\phi$ and a classifier $g$ pretrained on source graph $\gG^{\gS}$; Test-time target graph $\gG^{\gT}$; Source statistics $\probs(Y_v|Y_u,v\in \neighbor_u)\in \R_+^{|\gY|\times|\gY|}$
   \STATE Initialize $b^{(k)}\leftarrow 1$ and $\text{MLP}^{(k)}\text{ parameters} \leftarrow0$ for the $k$-th layer.
    \STATE Perform boundary refinement based on embeddings from $g\circ \phi(\gG^{\gT})$ and get soft pseudo-labels $\hat{y}$
    \STATE {\bfseries Get} $\gG^{\gT}_{new}$ {\bfseries by assigning edge weights:}
    \INDSTATE Compute $\gamma$ using  $\hat{y}$ via Eq. \ref{eq:gamma_ratio}
    \INDSTATE  Assign $\gamma$ only if node pairs $H(\hat{y})\leq \rho_1 \cdot \ln(|\mathcal{Y}|)$
        \INDSTATE Assign the parameterized $\alpha$ via Eq. \ref{eq:alpha_ratio}
    \STATE Update $\alpha$'s parameters $b^{(k)}$ and $\text{MLP}^{(k)}$ via Eq. \ref{eq:loss}
   \STATE {\bfseries return} $\hat{y}_{\mathrm{final}}$ after another boundary refinement

\end{algorithmic}
\end{algorithm}
\vspace{-1mm}

\begin{table*}[t]
\vspace{-2mm}
\scriptsize

\caption{Synthetic CSBM results (accuracy). \textbf{Bold} indicates improvements in comparison to the corresponding non-graph TTA baselines. \underline{Underline} indicates the best model. First six: imbalanced source training. Last two: balanced source training. }
\vspace{-4mm}
\label{table:syn}
\begin{center}
\resizebox{0.8\textwidth}{!}{%
\small
\begin{tabular}{lcccccccc}
\toprule
& \multicolumn{2}{c}{Nbr. Shift} &\multicolumn{2}{c}{Nbr.+ SNR Shift} &\multicolumn{2}{c}{Struct. Shift (Imbal.$\rightarrow$ Bal.)} &\multicolumn{2}{c}{Struct. Shift (Bal.$\rightarrow$ Imbal.)}\\
\midrule
ERM &82.70$\pm$4.45 &61.11$\pm$10.81 &77.03$\pm$3.99 &61.93$\pm$6.44 &50.41$\pm$4.88 &39.12$\pm$4.71 &68.27$\pm$5.00 &61.39$\pm$1.89 
\\
GTrans &86.67$\pm$3.59 &72.37$\pm$4.06 &79.55$\pm$1.23 &68.69$\pm$3.27 &59.24$\pm$2.12 &47.42$\pm$3.73 &79.29$\pm$2.71 &66.77$\pm$2.52  

\\
HomoTTT
&85.08$\pm$2.87 &65.79$\pm$9.07 &78.84$\pm$3.42 &65.31$\pm$5.69 &52.34$\pm$4.69 &40.68$\pm$5.24 
 &70.11$\pm$5.08 &62.19$\pm$2.51 

\\
SOGA &86.09$\pm$3.89 &70.39$\pm$7.96 &79.75$\pm$3.20 &69.00$\pm$5.28 &56.60$\pm$3.88 &44.09$\pm$5.32 &73.52$\pm$5.30 &63.76$\pm$3.45  

\\
ActMAD
&84.95$\pm$4.04 &65.17$\pm$10.19 &78.66$\pm$3.70 &64.95$\pm$5.99 &54.78$\pm$3.94 &42.22$\pm$4.83 &72.41$\pm$5.67 &63.39$\pm$3.33
\\

TENT
&87.48$\pm$2.86 &77.14$\pm$4.64 &81.04$\pm$2.72 &72.51$\pm$3.39 &76.48$\pm$6.21 &59.12$\pm$5.06 &77.21$\pm$5.53 &62.36$\pm$6.83  

\\
LAME &83.96$\pm$5.35 &61.44$\pm$11.33 &77.56$\pm$4.92 &62.58$\pm$7.24 &50.33$\pm$4.78 &39.05$\pm$4.67  &68.33$\pm$5.02 &61.26$\pm$1.97 

\\
T3A &77.05$\pm$7.10 &59.83$\pm$10.50 &71.44$\pm$6.11 &56.56$\pm$7.50 &48.13$\pm$5.64 &38.19$\pm$3.72 &68.50$\pm$4.81 &61.63$\pm$1.81

\\

\midrule
TSA-TENT  & \textbf{88.78$\pm$1.37} & \textbf{80.51$\pm$2.39} & \textbf{83.19$\pm$1.46} & \textbf{76.41$\pm$1.25} & \underline{\textbf{88.68$\pm$4.99}} & \underline{\textbf{66.25$\pm$7.75}} & \textbf{81.20$\pm$8.18} & \underline{\textbf{70.15$\pm$2.30}} 
\\
TSA-LAME & \textbf{88.96$\pm$1.66} & \textbf{80.02$\pm$5.44} & \textbf{83.51$\pm$0.55} & \underline{\textbf{79.56$\pm$1.82}} & \textbf{65.09$\pm$2.34} & \textbf{52.90$\pm$6.11} & \textbf{82.20$\pm$5.17} & \textbf{63.15$\pm$2.58}
\\
TSA-T3A  & \underline{\textbf{89.96$\pm$1.33}} & \underline{\textbf{81.08$\pm$2.73}} & \underline{\textbf{84.23$\pm$1.24}} & \textbf{76.89$\pm$2.02} & \textbf{65.59$\pm$2.57} & \textbf{52.34$\pm$7.19} & \underline{\textbf{82.55$\pm$5.06}} & \textbf{64.88$\pm$3.30} 
\\
\bottomrule
\end{tabular}
}
\end{center}
\vspace{-2mm}
\end{table*}

\begin{table*}[t]
\vspace{-2mm}
\scriptsize
\caption{MAG results (accuracy). \textbf{Bold} indicates improvements in comparison to the corresponding non-graph TTA baselines. \underline{Underline} indicates the best model.}
\vspace{-4mm}
\label{table:mag}
\begin{center}
\begin{adjustbox}{width = 0.9\textwidth}
\begin{tabular}{lcccccccccc}
\toprule
\textbf{Method} & US$\rightarrow$CN & US$\rightarrow$DE & US$\rightarrow$JP & US$\rightarrow$RU & US$\rightarrow$FR & CN$\rightarrow$US &CN$\rightarrow$DE & CN$\rightarrow$JP & CN$\rightarrow$RU & CN$\rightarrow$FR\\
\midrule
ERM &31.86$\pm$0.83 &32.22$\pm$1.16 &41.77$\pm$1.27 &29.22$\pm$1.64 &24.80$\pm$0.88 &37.41$\pm$1.01 &21.54$\pm$0.79 &30.12$\pm$0.72 &19.19$\pm$1.12 &16.92$\pm$0.58 
\\
GTrans  &31.77$\pm$0.91 &32.14$\pm$1.05 &41.55$\pm$1.23 &29.74$\pm$1.57 &25.03$\pm$0.85 &36.17$\pm$0.89 &21.07$\pm$0.93 &29.08$\pm$0.82 &19.68$\pm$1.14 &16.78$\pm$0.62  \\

HomoTTT &25.89$\pm$1.98 &29.65$\pm$1.41 &33.49$\pm$1.78 &29.84$\pm$1.68 &27.38$\pm$0.62 &24.39$\pm$2.12 &17.36$\pm$2.82 &21.00$\pm$1.90 &23.20$\pm$1.34 &15.02$\pm$1.42 

\\
SOGA &21.54$\pm$2.52 &25.48$\pm$0.93 &36.24$\pm$3.31 &29.07$\pm$4.14 &24.34$\pm$0.91 &38.95$\pm$3.35 &25.75$\pm$1.14 &38.25$\pm$1.42 &29.86$\pm$1.71 &23.50$\pm$0.67  \\
ActMAD &31.38$\pm$0.74 &31.93$\pm$1.18 &41.53$\pm$1.22 &29.68$\pm$1.73 &25.09$\pm$0.93 &36.04$\pm$0.97 &20.59$\pm$0.78 &29.01$\pm$0.67 &19.37$\pm$1.07 &16.30$\pm$0.53 

\\

TENT &26.72$\pm$1.33 &32.73$\pm$0.63 &40.80$\pm$0.91 &32.26$\pm$0.95 &28.32$\pm$0.66 &27.21$\pm$0.88 &15.66$\pm$0.86 &24.62$\pm$0.48 &21.37$\pm$0.73 &13.84$\pm$0.62 
\\ 
LAME &35.75$\pm$0.85 &33.64$\pm$1.70 &44.97$\pm$1.15 &30.19$\pm$1.64 &24.17$\pm$1.84 &40.08$\pm$1.13 &22.64$\pm$1.14 &33.00$\pm$1.48 &17.80$\pm$0.55 &17.43$\pm$0.93 
 \\
T3A  &41.47$\pm$1.15 &45.36$\pm$2.15 &50.34$\pm$0.94 &46.41$\pm$0.84 &40.26$\pm$1.69 &46.50$\pm$1.26 &38.62$\pm$1.03 &46.10$\pm$0.38 &43.11$\pm$0.76 &29.95$\pm$1.36  
\\
\midrule
TSA-TENT
& \textbf{27.30$\pm$1.61} & \textbf{32.84$\pm$0.78} & \textbf{40.82$\pm$0.99} & \textbf{32.53$\pm$0.91} & \textbf{28.62$\pm$0.63} & \textbf{27.89$\pm$0.98} & \textbf{16.22$\pm$1.17} & \textbf{24.87$\pm$0.56} & \textbf{22.05$\pm$0.84} & \textbf{14.20$\pm$0.71}
\\
TSA-LAME
& \textbf{37.95$\pm$0.97} & \textbf{36.29$\pm$1.65} & \textbf{46.86$\pm$1.13} & \textbf{32.86$\pm$2.23} & \textbf{27.22$\pm$1.48} & \textbf{44.83$\pm$0.88} & \textbf{28.51$\pm$0.44} & \textbf{39.80$\pm$0.99} & \textbf{24.54$\pm$0.87} & \textbf{22.39$\pm$0.30}
\\
TSA-T3A
& \underline{\textbf{41.65$\pm$0.99}} & \underline{\textbf{47.01$\pm$2.08}} & \underline{\textbf{51.65$\pm$0.90}} & \underline{\textbf{46.61$\pm$0.88}} & \underline{\textbf{43.45$\pm$0.81}} & \underline{\textbf{48.09$\pm$0.60}} & \underline{\textbf{39.18$\pm$1.87}} & \underline{\textbf{46.50$\pm$0.25}} & \underline{\textbf{43.70$\pm$1.38}} & \underline{\textbf{30.89$\pm$2.13}}
\\
\bottomrule
\end{tabular}
\end{adjustbox}
\end{center}
\vspace{-5mm}
\end{table*}

\begin{table*}[t]
\vspace{-2mm}
\scriptsize
\caption{Pileup results (f1-scores). \textbf{Bold} indicates improvements in comparison to the corresponding non-graph TTA baselines. \underline{Underline} indicates the best model.}
\vspace{-4mm}
\label{table:pileup}
\begin{center}
\resizebox{0.8\textwidth}{!}{%
\small
\begin{tabular}{lcccccccc}
\toprule
\textbf{Method} & PU10$\rightarrow$30 & PU30$\rightarrow$10 & PU10$\rightarrow$50 & PU50$\rightarrow$10 & PU30$\rightarrow$140 & PU140$\rightarrow$30 &gg$\rightarrow$qq & qq$\rightarrow$gg \\
\midrule
ERM &57.98$\pm$0.66 &65.40$\pm$2.17 &47.66$\pm$1.47 &67.81$\pm$1.70 &19.42$\pm$2.59 &57.49$\pm$3.02 &69.35$\pm$0.81 &67.90$\pm$0.46 
\\
GTrans  &57.37$\pm$1.49 &63.66$\pm$2.43 &48.13$\pm$2.11 &65.74$\pm$1.95 &28.41$\pm$4.01 &57.65$\pm$1.97 &69.17$\pm$0.82 &67.37$\pm$0.56  
\\
HomoTTT
&58.29$\pm$1.34 &62.50$\pm$3.35 &49.07$\pm$1.32 &65.18$\pm$1.76 &28.63$\pm$2.34 &56.11$\pm$3.83 &70.31$\pm$0.58 &\underline{68.85$\pm$0.29}
\\
SOGA &60.13$\pm$0.75 &69.71$\pm$0.92 &51.68$\pm$0.83 &67.47$\pm$1.42 &37.16$\pm$1.15 &56.59$\pm$3.86 &\underline{70.83$\pm$0.66}
&68.84$\pm$0.97  
\\
ActMAD &58.54$\pm$0.81 &64.59$\pm$2.38 &49.39$\pm$1.35 &67.12$\pm$1.74 &20.88$\pm$2.66 &50.97$\pm$5.14 &69.20$\pm$0.79 &68.04$\pm$0.46
\\
TENT &58.25$\pm$1.85 &58.73$\pm$4.80 &48.27$\pm$2.58 &58.41$\pm$7.89 &30.59$\pm$3.07 &0.04$\pm$0.08 &68.72$\pm$0.48 &68.00$\pm$0.60 
\\
LAME &57.77$\pm$0.66 &66.29$\pm$2.42 &47.31$\pm$1.58 &68.24$\pm$1.61 &11.96$\pm$3.08 &58.81$\pm$2.00 &69.25$\pm$0.52 &67.94$\pm$0.66 
\\
T3A &58.74$\pm$1.09 &70.02$\pm$2.33 &49.61$\pm$1.03 &70.85$\pm$0.64 &30.33$\pm$3.66 &54.79$\pm$1.52 &69.45$\pm$0.97 &68.45$\pm$0.61  
\\
\midrule
TSA-TENT & \textbf{58.27$\pm$1.84} & \textbf{60.11$\pm$3.67} & \textbf{48.44$\pm$2.67} & \textbf{58.79$\pm$7.01} & \textbf{36.22$\pm$2.04} & \textbf{6.13$\pm$5.93} & \textbf{69.26$\pm$0.54} & \textbf{68.80$\pm$0.78}
\\
TSA-LAME & \textbf{60.76$\pm$0.71} & \textbf{66.94$\pm$1.62} & \textbf{50.72$\pm$1.49} & \textbf{68.53$\pm$1.55} & \textbf{36.43$\pm$2.57} & \underline{\textbf{59.36$\pm$2.53}} & \textbf{69.85$\pm$0.66} & \textbf{68.31$\pm$0.54}
\\
TSA-T3A
& \underline{\textbf{61.03$\pm$1.19}} & \underline{\textbf{70.42$\pm$1.66}} & \underline{\textbf{52.39$\pm$0.86}} & \underline{\textbf{70.92$\pm$0.64}} & \underline{\textbf{37.33$\pm$2.59}} & \textbf{57.73$\pm$1.98} & \textbf{69.73$\pm$1.09} & \textbf{68.75$\pm$0.66}
\\
\bottomrule
\end{tabular}
}
\end{center}
\vspace{-2mm}
\end{table*}

\begin{table*}[t]
\vspace{-2mm}
\scriptsize
\caption{Arxiv and DBLP/ACM results (accuracy). \textbf{Bold} indicates improvements in comparison to the corresponding non-graph TTA baselines. \underline{Underline} indicates the best model.}
\vspace{-4mm}
\label{table:arxiv}
\begin{center}
\resizebox{0.8\textwidth}{!}{%
\small
\begin{tabular}{lcccccccc}
\toprule
& \multicolumn{2}{c}{1950-2007} &\multicolumn{2}{c}{1950-2009} &\multicolumn{2}{c}{1950-2011} &\multicolumn{2}{c}{DBLP \& ACM}\\
\textbf{Method} &2014-2016 &2016-2018 &2014-2016 &2016-2018 &2014-2016 &2016-2018 & D$\rightarrow$A & A$\rightarrow$D\\
\midrule
ERM &41.04$\pm$0.50 &40.48$\pm$1.45 &44.80$\pm$1.96 &42.38$\pm$3.64 &53.40$\pm$1.14 &51.68$\pm$1.76 
&28.95$\pm$4.50 &52.45$\pm$6.81 

\\
GTrans &40.92$\pm$0.32 &40.25$\pm$1.69 &45.31$\pm$1.99 &43.83$\pm$3.15 &53.68$\pm$0.95 &52.57$\pm$1.23 
&34.47$\pm$2.99
&49.68$\pm$5.44 
\\
HomoTTT
&36.33$\pm$0.95 &32.83$\pm$1.15 &41.22$\pm$1.15 &38.86$\pm$1.42 &48.18$\pm$1.38 &44.61$\pm$2.90 
&31.25$\pm$3.96 &52.90$\pm$6.25 
\\
SOGA &34.11$\pm$2.91 &28.94$\pm$4.68 &41.59$\pm$2.03 &39.61$\pm$3.22 &50.12$\pm$1.38 &43.03$\pm$3.70 
&35.74$\pm$5.20 &\underline{58.59$\pm$8.35}
\\
ActMAD &41.02$\pm$0.53 &40.43$\pm$1.44 &44.88$\pm$1.94 &42.39$\pm$3.45 &53.43$\pm$1.13 &51.78$\pm$1.78 
&29.47$\pm$4.46 &52.75$\pm$6.89 
\\

TENT&40.77$\pm$0.32 &39.58$\pm$1.30 &45.74$\pm$1.26 &43.98$\pm$1.90 &54.45$\pm$1.03 &53.19$\pm$1.31 
&36.59$\pm$4.26 &53.38$\pm$6.60 
\\
LAME  &40.91$\pm$0.88 &41.02$\pm$1.81 &45.13$\pm$2.49 &43.23$\pm$4.42 &53.63$\pm$1.21 &52.07$\pm$1.62 
&28.02$\pm$5.08 &52.74$\pm$7.08 
\\
T3A &39.57$\pm$1.06 &38.98$\pm$1.78 &43.34$\pm$1.51 &41.17$\pm$2.88 &50.10$\pm$1.70 &48.00$\pm$2.82 
&35.29$\pm$5.58 &52.50$\pm$6.83 
\\
\midrule
TSA-TENT & \textbf{40.92$\pm$0.34} & \textbf{39.78$\pm$1.21} & \underline{\textbf{46.68$\pm$1.24}} & \underline{\textbf{45.08$\pm$1.71}} & \underline{\textbf{54.78$\pm$0.80}} & \underline{\textbf{53.61$\pm$1.24}} & \underline{\textbf{37.06$\pm$3.93}} & \textbf{54.06$\pm$6.63}
\\
TSA-LAME & \underline{\textbf{41.34$\pm$0.83}} & \underline{\textbf{41.23$\pm$1.70}} & \textbf{45.42$\pm$2.45} & \textbf{43.71$\pm$4.99} & \textbf{54.05$\pm$1.05} & \textbf{52.76$\pm$1.47} & \textbf{28.66$\pm$5.27} & \textbf{52.90$\pm$7.30}
\\
TSA-T3A & \textbf{40.03$\pm$1.03} & \textbf{39.70$\pm$1.49} & \textbf{44.09$\pm$1.53} & \textbf{42.17$\pm$2.91} & \textbf{50.96$\pm$1.68} & \textbf{49.21$\pm$2.67} & \textbf{35.30$\pm$5.57} & \textbf{52.55$\pm$6.92}
\\
\bottomrule
\end{tabular}
}
\end{center}
\vspace{-5mm}
\end{table*}

\subsection{Decision Boundary Refinement}
\label{subsec:boundary}

In GTTA, even when CSS is fully addressed and high-SNR node representations are obtained, the decision boundary may still suffer from a mismatch due to label and feature shifts (Fig.~\ref{fig:pa_vis} (d)).


A straightforward approach to refining the decision boundary at test time is to adjust the classifier's batch normalization parameters (TENT~\cite{wang2020tent}) or directly modify its output (T3A~\cite{iwasawa2021test} and LAME~ \cite{boudiaf2022parameter}). 
We integrate these techniques into our framework for two folds: (1) their refined pseudo-labels provide a more reliable assignment of $\mgamma$ and can supervise the update of SNR adjustment. (2) Reciprocally, better alignment of neighborhood information can further refine the decision boundary.

Matcha~\cite{bao2024Matcha} is a very recent work on GTTA that also integrates non-graph TTA methods with an approach for addressing structure shift. However, it considers only degree shift and homophily shift, where homophily shift is merely a special case of CSS in our context. Consequently, Matcha does not introduce the parameter $\mgamma$ and does not fully address the structure shift. 







\textbf{TSA Overview.}
Note that $\mgamma$ is obtained from the initial pseudo labels.
Only the parameters in $\text{MLP}^{(k)}$ and $b^{(k)}$ for $\alpha$ estimation in Eq.~\ref{eq:alpha_ratio} need to be optimized at test time according to Eq.~\ref{eq:loss}.



\vspace{-5mm}
\begin{equation}
    \gL_{CE} = \frac{1}{|\gV^{\gT}|}\sum_{u\in\gV^{\gT}}\text{cross-entropy}(y_u',\hat{y}_u) 
    \label{eq:loss}
\end{equation}
\vspace{-2mm}

where $y_u'$ is the hard pseudo label refined by the procedure in Sec.~\ref{subsec:boundary} and $\hat{y}_u$ is the soft prediction from the original model.
After updating $\alpha$, TSA makes predictions on the newly weighted graphs and then further adapts the boundaries as described in Sec. \ref{subsec:boundary}. 
Our algorithm is summarized in Alg. \ref{alg:example}. 

\vspace{-3mm}

\section{EXPERIMENTS} 
\vspace{-3mm}

\label{sec:Experiments}
We evaluate \proj 
on synthetic datasets and 5 real-world datasets. Additional results and discussions (e.g., hyperparameter sensitivity, ablation studies, computation efficiency) are provided in Appendix.
\vspace{-2mm}

\subsection{Datasets and Baselines}
\vspace{-2mm}

\textbf{Synthetic Data.} 
We use the CSBM model~\cite{deshpande2018contextual} to generate three-class datasets, focusing on structure shifts while keeping feature distributions unchanged. Specifically, we evaluate performance under three conditions: (1) CSS, (2) CSS plus SNR (induced by degree changes) shift, and (3) full structure shift (CSS + label shift) plus SNR shift. This experimental setup is motivated by Thm.~\ref{theory:decomposeerror} and the observations in Sec.\ref{subsec:snr}. For each shift scenario, we examine two levels of severity, with the left column in Table~\ref{table:syn} corresponding to the smaller shift.

\textbf{MAG~\cite{liu2024pairwise}} is a citation network extracted by OGB-MAG \cite{hu2020open}.
Distribution shifts arise from partitioning papers into distinct graphs based on their countries of publication. The task is to classify the publication venue of each paper. Our model is pretrained on graphs from the US and China and subsequently adapted to graphs from other countries.

\textbf{Pileup Mitigation~\cite{liu2023structural}} is a dataset curated for the data-denoising step in high-energy physics~\cite{bertolini2014pileup}.
Particles are generated by proton-proton collisions in LHC experiments. The task is to classify leading-collision (LC) neutral particles from other-collision (OC) particles. Particles are connected via kNN graphs if they are close in the $\eta-\phi$ space shown in Fig.~\ref{fig:pileup_example}.
Distribution shifts arise from pile-up (PU) conditions (primarily structure shift), where PU level indicates the number of OC in the beam, and from the particle generation processes gg$\rightarrow$qq and qq$\rightarrow$gg (primarily feature shift).

\textbf{Arxiv~\cite{hu2020open}} is a citation network between all Computer Science (CS) arXiv papers. 
Distribution shifts originate from different time periods.
Our model is pretrained on the earlier time span 1950 to 2007/ 2009/ 2011 and tested on later periods 2014-2016 and 2016-2018.

\textbf{DBLP and ACM~\cite{tang2008arnetminer, wu2020unsupervised}} are two citation networks. 
The model is trained on one network and adapted to the other to predict the research topic of a paper (node).

\textbf{Baselines.} We compare TSA with six baselines. For non-graph TTA methods, we include TENT \cite{wang2020tent}, LAME \cite{boudiaf2022parameter}, T3A \cite{iwasawa2021test}, and ActMAD \cite{mirza2023actmad}.
GTrans \cite{jin2022empowering}, HomoTTT \cite{zhang2024fully}, SOGA \cite{mao2024source}, and Matcha \cite{bao2024Matcha} are direct comparisons in GTTA.
Matcha is limited to GPRGNN due to its design. We present the results for GraphSAGE \cite{hamilton2017inductive} in the main paper, while the results for GPRGNN \cite{chien2020adaptive} (including Matcha), GCN \cite{kipf2016semi} are provided in the Appendix.

\vspace{-3mm}
\subsection{Result Analysis}
\label{subsec:result_analysis}
\vspace{-2mm}



\textbf{Comparison with Baselines.} \proj achieves strong performance across all datasets and GNN backbones. In all cases, the best results are obtained by GTTA methods (\underline{underlined} across Tables~\ref{table:syn}–\ref{table:arxiv}), highlighting the importance of addressing structure shift in graph data.
In the synthetic dataset (Table~\ref{table:syn}), \proj outperforms GTTA baselines under plain CSS, demonstrating neighborhood alignment as an effective approach to handling CSS. The same table also showcases that dataset imbalance severely affects source training, which aligns with the worst-case error described in Theorem~\ref{theory:decomposeerror}.
In real-world datasets, \proj achieves an average improvement of 10\% over existing GTTA baselines. 
GTrans relies on self-supervised learning to augment graphs. Although it shows improvements on CSBM, its performance is limited on real-world datasets (Tables~\ref{table:mag}–\ref{table:arxiv}). SOGA and HomoTTT depend on the homophily assumption, which does not always hold in real-world graphs~\cite{luan2024heterophilic}, leading to unstable performance. 
In contrast, \proj makes no assumptions and consistently delivers improvements by structure shifts.

\textbf{Different \proj-version.}
Overall, \proj-variants consistently yield performance gains of up to 21.25\% over the corresponding non-graph TTA baselines (\textbf{bold} across Table~\ref{table:syn}- \ref{table:arxiv}).
In particular, the performance of the \proj variants differs in how the boundary refinement addresses the label shift. 
\proj-T3A is less sensitive to the dominance of the majority class, as it refines the boundary based on distances to prototypes, offering increased robustness~\cite{zhang2022divide}. Therefore, \proj-T3A works well on imbalanced datasets such as MAG (Table~\ref{table:mag}) and Pileup (Table~\ref{table:pileup}).
We observe \proj-TENT and \proj-LAME perform well on Arxiv and ACM/DBLP (Table~\ref{table:arxiv}), as they directly maximize the most probable class, with \proj-LAME introducing a regularization overhead.

\textbf{Analysis of $\alpha$.}
The learned $\alpha$ depends on node degrees and GNN layer depth. The trend of $\alpha$ in Fig.~\ref{fig:snr} (c) aligns with our expectations: earlier GNN layers require less attention to neighborhood-aggregated representations (small $\alpha$), while deeper layers increasingly rely on them. 
Additionally, the plot reveals that nodes with higher degrees (and thus higher SNR ratios) place greater weight on neighborhood-aggregated messages.
However, the overall effect of different GNN layers appears to be greater than that of varying node degrees. 
Lastly, the learned  $\alpha$ exhibits a trend similar to the source SNR ratio shown in Fig.~\ref{fig:snr} (a), assigning lower weights in layer 1 and nearly identical weighting patterns in layer 2 and 3.
\section{LIMITATIONS}
\label{app:lim}
\vspace{-3mm}


While \proj effectively addresses distribution shifts in many settings, we identify specific limitations where its assumptions or mechanisms face challenges. We openly discuss these boundaries to clarify the method's scope to inspire future work.

\textbf{Severe Feature Shifts.} We observe that \proj's benefits can be marginal when feature shift significantly dominates over structure shift, as evidenced in the last two columns of the Pileup dataset. This is primarily because the neighborhood alignment strategy is not robust in such extreme scenarios. Furthermore, our current estimation of $\mgamma$ heavily depends on the accuracy of refined soft pseudo-labels. While confusion-matrix-based methods~\cite{lipton2018detecting, azizzadenesheli2019regularized, alexandari2020maximum} can provide more robust estimations, they often require additional adversarial training, which is incompatible with GTTA constraints. Exploring robust, GTTA-compatible approaches to estimating $\mgamma$ under severe feature shifts remains a crucial direction for future work.

\textbf{Violation of the Markov Assumption.} \proj calculates edge weights based on local pairwise conditional probabilities. If the graph exhibits strong long-range dependencies or subgraph correlations that extend beyond the 1-hop neighborhood, the pairwise term becomes an insufficient statistic. In such cases, the model fails to account for latent variables or hidden structures that co-determine the labels of interconnected nodes, leading to biased alignment weights.

\textbf{Non-Homogeneity of Neighborhood Distributions.} The current formulation estimates a single global $\mgamma$ for the entire graph. If the target graph is heavily non-homogeneous---meaning different clusters exhibit distinctly different structural shifts---this global expectation will be insufficient. A single global parameter fails to capture local variations in neighborhood structures, which results in improved alignment within dominant clusters while inadvertently degrading performance in minority or structurally distinct regions.

\section{BROADER IMPACT}
\vspace{-2mm}
\label{app:broader}
Our work studies structure shift and proposes \proj for graph test-time adaptation (GTTA). The goal of GTTA is to enhance model performance under distribution shifts with computation and privacy constraints. We do not foresee any direct negative societal impacts.

\section{CONCLUSION}
\vspace{-2mm}
In this work, we theoretically analyze the impact of various graph shifts on the generalization gap in GTTA and highlight their empirical effects on test-time performance degradation. Based on these insights, we propose \proj, a method that addresses structure shift through neighborhood alignment, SNR-inspired adjustment, and boundary refinement. Extensive experiments validate the effectiveness and efficiency of \proj.

\section{ACKNOWLEDGMENT}
\vspace{-2mm}
H. Hsu, S. Liu, and P. Li are partially supported by NSF awards PHY-2117997, IIS-2239565, IIS-2428777, and CCF-2402816; DOE award DE-FOA-0002785; JPMC faculty awards; and Microsoft Azure Research Credits for Generative AI.

{
\bibliographystyle{bib_style}
\bibliography{main}
}

\section*{Checklist}



\begin{enumerate}

  \item For all models and algorithms presented, check if you include:
  \begin{enumerate}
    \item A clear description of the mathematical setting, assumptions, algorithm, and/or model. [Yes] Please see Section \ref{sec:ttsa}.
    \item An analysis of the properties and complexity (time, space, sample size) of any algorithm. [Yes] We provide a runtime complexity analysis in Appendix.
    \item (Optional) Anonymized source code, with specification of all dependencies, including external libraries. [Yes] We provide anonymized source code in the supplementary materials.
  \end{enumerate}

  \item For any theoretical claim, check if you include:
  \begin{enumerate}
    \item Statements of the full set of assumptions of all theoretical results. [Yes] Please see Theorem \ref{theory:decomposeerror}.
    \item Complete proofs of all theoretical results. [Yes] Please see Appendix.
    \item Clear explanations of any assumptions. [Yes] Please see Theorem \ref{theory:decomposeerror}.
  \end{enumerate}

  \item For all figures and tables that present empirical results, check if you include:
  \begin{enumerate}
    \item The code, data, and instructions needed to reproduce the main experimental results (either in the supplemental material or as a URL). [Yes] We provide anonymized source code in the supplementary materials.
    \item All the training details (e.g., data splits, hyperparameters, how they were chosen). [Yes] Please see Appendix Experimental Setup.
    \item A clear definition of the specific measure or statistics and error bars (e.g., with respect to the random seed after running experiments multiple times). [Yes] Please see Appendix Experimental Setup and Section \ref{sec:Experiments}.
    \item A description of the computing infrastructure used. (e.g., type of GPUs, internal cluster, or cloud provider). [Yes] Please see Appendix Experimental Setup.
  \end{enumerate}

  \item If you are using existing assets (e.g., code, data, models) or curating/releasing new assets, check if you include:
  \begin{enumerate}
    \item Citations of the creator If your work uses existing assets. [Yes] We include citations for existing assets.
    \item The license information of the assets, if applicable. [Yes]
    \item New assets either in the supplemental material or as a URL, if applicable. [Yes] We provide anonymized source code in the supplementary materials.
    \item Information about consent from data providers/curators. [Yes]
    \item Discussion of sensible content if applicable, e.g., personally identifiable information or offensive content. [Not Applicable]
  \end{enumerate}

  \item If you used crowdsourcing or conducted research with human subjects, check if you include:
  \begin{enumerate}
    \item The full text of instructions given to participants and screenshots. [Not Applicable]
    \item Descriptions of potential participant risks, with links to Institutional Review Board (IRB) approvals if applicable. [Not Applicable]
    \item The estimated hourly wage paid to participants and the total amount spent on participant compensation. [Not Applicable]
  \end{enumerate}

\end{enumerate}


\clearpage
\appendix
\onecolumn
\aistatstitle{Appendix for Structural Alignment Improves \\Graph Test-Time Adaptation}

\section{OMITTED PROOFS}


 

\subsection{Proof of Theorem \ref{theory:decomposeerror}}
\label{app:thm33_proof}

\begin{lemma}
\label{error decomposition lemma}
    Assume $\probm \coloneq 
    \alpha_j 
    \probs(\predy_u=i| \rvY_u=j)
    + 
    \beta_j 
    \probt(\predy_u=i| \rvY_u=j)$ to be a mixture conditional probability of source and target domain. The mixture weight is given by $\alpha_j$ and $\beta_j$ where $\forall \alpha_j,\beta_j \geq 0$ and $\alpha_j+\beta+j=1$, the following upper bound holds:
    \begin{align*}
    &\abs{\gammas{u_j}\probs(\predy_u=i| \rvY_u=j) - \gammat{u_j}\probt(\predy_u=i| \rvY_u=j} \\
    &\leq\abs{\gammas{u_j} - \gammat{u_j}} \cdot 
    \probm
    + (\gammas{u_j}\beta_j+\gammat{u_j}\alpha_j)\abs{ \probs(\predy_u=i| \rvY_u=j) -  \probt(\predy_u=i| \rvY_u=j)}
    \end{align*}

\begin{proof}
    To simplify the derivation we first abuse the notation by letting $\probs$ denote $\probs(\predy_u=i| \rvY_u=j, \rvW_u=k)$ and $\probt$ denote $\probt(\predy_u=i| \rvY_u=j, \rvW_u=k)$.
    \begin{align*}
    &\abs{\gammas{u_j}\probs - \gammat{u_j}\probt} 
    - \abs{\gammas{u_j}- \gammat{u_j}} \cdot \probm \\
    &\leq \abs{\gammas{u_j}\probs - \gammat{u_j}\probt - 
    \rpar{\gammas{u_j} - \gammat{u_j}}\probm}\\
    &=\abs{\gammas{u_j}\rpar{\probs-\probm} - \gammat{u_j}\rpar{\probt-\probm}} \\
    &\leq \gammas{u_j}\abs{\probs-\probm} + \gammat{u_j} \abs{\probt-\probm}
    \end{align*}
    In order to simplify the first term we substitute the definition of $\probm \coloneq \alpha_j \probs
    + \beta_j \probt$:
    \begin{align*}
        \abs{\probs-\probm} = \abs{\probs - \alpha_j \probs -\beta_j\probt} 
        =  \abs{ \beta_j\probs -\beta_j \probt}
    \end{align*}
    Similarly for the second term:

    \begin{align*}
        \abs{\probt-\probm} = \abs{\probt - \alpha_j \probs -\beta_j \probt} = \abs{\alpha_j \probt -\alpha_j \probs}
    \end{align*}
    Using the two identities, we can proceed with the derivation:
    \begin{align*}
         &\abs{\gammas{u_j}\probs - \gammat{u_j}\probt} - \abs{\gammas{u_j}
         - \gammat{u_j}} \cdot \probm \\
    &\leq \gammas{u_j}\beta_j\abs{ \probs - \probt} + \gammat{u_j}\alpha_j\abs{\probt-\probs} \\
    &= (\gammas{u_j}\beta_j+\gammat{u_j}\alpha_j)\abs{ \probs - \probt}
    \end{align*}
\end{proof}
\end{lemma}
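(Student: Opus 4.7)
The plan is to prove the lemma by a single algebraic decomposition followed by the triangle inequality and then the mixture identity. Concretely, I would first write the inner difference as a telescoping sum by inserting $\pm(\gammas{u_j}-\gammat{u_j})\probm$:
\begin{align*}
\gammas{u_j}\probs - \gammat{u_j}\probt
= (\gammas{u_j}-\gammat{u_j})\,\probm
+ \gammas{u_j}(\probs-\probm)
- \gammat{u_j}(\probt-\probm),
\end{align*}
where I abbreviate $\probs$ for $\probs(\predy_u=i\mid Y_u=j)$ and likewise for $\probt$. Applying the triangle inequality to the right-hand side then yields
\begin{align*}
\abs{\gammas{u_j}\probs-\gammat{u_j}\probt}
\le \abs{\gammas{u_j}-\gammat{u_j}}\probm
+ \gammas{u_j}\abs{\probs-\probm}
+ \gammat{u_j}\abs{\probt-\probm},
\end{align*}
which isolates the first term of the stated bound and reduces the task to controlling $\abs{\probs-\probm}$ and $\abs{\probt-\probm}$.

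Next I would simply plug in the definition $\probm=\alpha_j\probs+\beta_j\probt$ and use $\alpha_j+\beta_j=1$ to collapse each of these two residuals. This gives the clean identities $\probs-\probm=\beta_j(\probs-\probt)$ and $\probt-\probm=-\alpha_j(\probs-\probt)$, so that $\abs{\probs-\probm}=\beta_j\abs{\probs-\probt}$ and $\abs{\probt-\probm}=\alpha_j\abs{\probs-\probt}$. Substituting these back and factoring $\abs{\probs-\probt}$ delivers the coefficient $(\gammas{u_j}\beta_j+\gammat{u_j}\alpha_j)$ on the second term, matching the statement exactly.

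The only real choice in this argument is the decomposition step, i.e.\ recognizing that adding and subtracting $(\gammas{u_j}-\gammat{u_j})\probm$ is precisely what aligns the triangle inequality with the two terms on the right-hand side. I do not anticipate any serious obstacle: nonnegativity of $\probm$ is needed to drop the absolute value on the first term, and nonnegativity of $\alpha_j,\beta_j,\gammas{u_j},\gammat{u_j}$ is needed to pull scalars out of $\abs{\cdot}$, but both are given. Note that the additional conditioning on $\rvW_u=k$ that appears inside the displayed proof hint in the excerpt is inessential here—the same argument works verbatim with or without it—so I would state and prove the lemma as written in the hypothesis and simply remark that the same inequality holds pointwise after conditioning on any additional variable.
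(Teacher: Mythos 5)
Your proof is correct and is essentially the same argument as the paper's: the paper obtains the identical decomposition by moving the term $\abs{\gammas{u_j}-\gammat{u_j}}\cdot\probm$ to the left and invoking the reverse triangle inequality, which is trivially equivalent to your forward telescoping step, and the two residual identities and final recombination coincide exactly. No further comparison is needed.
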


\decomposeerror*

\begin{proof}
     
    We start by converting the average error rate into individual error probabilities. For sufficiently large $|\node|$ we have $|\node|\approx |\node|-1$. Given $|\nodes|\rightarrow\infty$ and $|\nodet|\rightarrow\infty$, we can have $|\node|\approx |\nodes| \approx |\nodet|$. By applying the triangle inequality and assuming that the graphs are sufficiently large, we obtain the following inequality:
    \begin{align*}
        &\abs{\errors(h \circ g) - \errort(h \circ g)} \\
       = &\abs{\probs(g(\phi(X_u, \rmA))\neq Y_u) - \probt(g(\phi(X_u, \rmA))\neq Y_u)}\\
       =&\abs{ \probs(\predy_u \neq \rvY_u) -  \probt(\predy_u \neq \rvY_u)} \\
    \end{align*}
To simplify the notation, define $\gammau{u_j}\coloneq \probu(\rvY_u=j)$, $\piu{v_k|u_j} \coloneq \probu(\rvY_{v}=k | \rvY_u=j, v\in \neighbor_u )$, and $\omegau{u_i|u_j,v_k}\coloneq \probu(\predy_u=i| \rvY_u=j, \rvY_{v}=k, v\in \neighbor_u)$ for $\mathcal{U}) \in \cbrace*{\mathcal{S}, \mathcal{T}}$. Using the the law of total probability and assuming that the label prediction depends on the distribution of neighborhood labels, we can derive the following identity:

\begin{align*}
    \probu(\predy_u \neq \rvY_u) &= \sum_{i\neq j}\probu(\predy_u=i, \rvY_u=j)= \sum_{i\neq j}\probu(\rvY_u=j) \probu(\predy_u=i| \rvY_u=j)\\
    \\
      &=\sum_{i\neq j}\probu(\rvY_u=j)  \bigg(\sum_{k\in\mathcal{Y}}  \probu(\rvY_{v}=k | \rvY_u=j, v\in \neighbor_u )
        \probu(\predy_u=i| \rvY_u=j, \rvY_{v}=k, v\in \neighbor_u) \bigg)\\
    &=\sum_{i\neq j}\gammau{u_j}\sum_{k\in\mathcal{Y}}\piu{v_k|u_j}\omegau{u_i|u_j,v_k}
\end{align*}


Most GNNs  marginalize the effect of degree magnitude, meaning that only the expectation of the neighboring nodes' label distribution and the label distribution of the central node itself influence the prediction.
We use Lemma \ref{error decomposition lemma} to bound the terms above.
Since $\forall j \in \mathcal{Y},
\gammas{u_j} \text{ and } \gammat{u_j} \in [0,1]$ and we have $\alpha_j + \beta_j = 1$, we obtain:
\begin{align*}
    &|\errors(h \circ g) - \errort(h \circ g)| \\
    &\leq 
    \sum_{i\neq j}\gammas{u_j}\probs(\predy_u=i| \rvY_u=j) - \sum_{i\neq j}\gammat{u_j}\probt(\predy_u=i| \rvY_u=j)
    \\
    &\leq \sum_{i\neq j}
    \abs{\gammas{u_j}\probs(\predy_u=i| \rvY_u=j) - \gammat{u_j}\probt(\predy_u=i| \rvY_u=j)} 
    \tag{a}
    \\
    &\leq \sum_{i\neq j}
    \rpar*{\abs{\gammas{u_j} - \gammat{u_j}} \cdot 
    \probm
    +(\gammas{u_j}\beta_j + \gammat{u_j}\alpha_j )
    \abs{ \probs - \probt}  
    }
    \tag{b}
    \\
    &= \sum_{i\neq j}
    \rpar*{\abs{\gammas{u_j} - \gammat{u_j}} \cdot\probs(\predy_u=i| \rvY_u=j) + \gammat{u_j}
    \abs{\probs(\predy_u=i| \rvY_u=j) - \probt(\predy_u=i| \rvY_u=j)}
    }
    \tag{c}
    \\
    &= \sum_{i\neq j}\abs{\gammas{u_j} - \gammat{u_j}} \cdot\probs(\predy_u=i| \rvY_u=j) + \sum_{i\neq j}\gammat{u_j}
    \abs{\probs(\predy_u=i| \rvY_u=j) - \probt(\predy_u=i| \rvY_u=j)}
    \\
    &\leq 
    \rpar*{\sum_{j\in\gY}\abs{\gammas{u_j} - \gammat{u_j}} }
    \cdot\balerrorrate{\predy_u}{\rvY_u} 
    + 
    \sum_{i\neq j}\gammat{u_j}
        \abs{\probs(\predy_u=i| \rvY_u=j) - \probt(\predy_u=i| \rvY_u=j)}
    \tag{d}
    \\
    &=2TV\rpar*{\probs(\rvY_u=j), \probt(\rvY_u=j)}\cdot\balerrorrate{\predy_u}{\rvY_u}+ 
    \sum_{i\neq j}\gammat{u_j}
        \abs{\probs(\predy_u=i| \rvY_u=j) - \probt(\predy_u=i| \rvY_u=j)}
    \tag{e} 
\end{align*}
By applying the triangle inequality, (a) is obtained.
Following Lemma \ref{error decomposition lemma} we have (b).
(c) is obtained by choosing $\alpha_j=1$ and $\beta_j=0, \forall j \in \mathcal{Y}$.
(d) is derived by applying Holder's inequality.
(e) is based on the definition of total variation distance where 
    $TV(\probs(\rvY_u=j), \probt(\rvY_u=j))=\frac{1}{2}\sum_{j\in\gY}\abs{\probs(\rvY_u=j) - \probt(\rvY_u=j)}$ is the total variation distance between the label distribution of source and target.

Similarly, the second term can be decomposed by Lemma  \ref{error decomposition lemma} while setting $\omegam{u_i|u_j,v_k} \coloneq 
    \alpha_j 
     \probs(\predy_u=i| \rvY_u=j, \rvY_{v}=k, v\in \neighbor_u)
    + 
    \beta_j 
     \probt(\predy_u=i| \rvY_u=j, \rvY_{v}=k, v\in \neighbor_u)$, $\omegas{u_i|u_j,v_k}\coloneq \probs(\predy_u=i| \rvY_u=j, \rvY_{v}=k, v\in \neighbor_u)$, and $\omegat{u_i|u_j,v_k}\coloneq \probt(\predy_u=i| \rvY_u=j, \rvY_{v}=k, v\in \neighbor_u)$. 
The decomposition is instead with respect to $\piu{v_k|u_j} \coloneq \probu(\rvY_{v}=k | \rvY_u=j, v\in \neighbor_u )$.

\begin{align*}
    &\sum_{i\neq j}\gammat{u_j}
        \abs{\probs(\predy_u=i| \rvY_u=j) - \probt(\predy_u=i| \rvY_u=j)}
    \\
    &=\sum_{i\neq j}\gammat{u_j} 
    \begin{aligned}[t]
        &\abs{ \sum_{k\in\mathcal{Y}}  
        \probs(\rvY_{v}=k | \rvY_u=j, v\in \neighbor_u )
        \probs(\predy_u=i| \rvY_u=j, \set{\rvY_{v}:v\in \neighbor_u}) \\
        &\quad - \sum_{k\in\mathcal{Y}}  
        \probt(\rvY_{v}=k | \rvY_u=j, v\in \neighbor_u )
        \probt(\predy_u=i| \rvY_u=j, \set{\rvY_{v}:v\in \neighbor_u})
        }
    \end{aligned}
    \\
    &\leq\sum_{i\neq j} \gammat{u_j}\sum_{k\in\mathcal{Y}}
        \abs{  \pis{v_k|u_j}
        \omegas{u_i|u_j,v_k} 
        -
        \pit{v_k|u_j} \omegat{u_i|u_j,v_k}
        }
    \\
    &\leq\sum_{i\neq j} \gammat{u_j}\sum_{k\in\mathcal{Y}}\rpar*{
    \abs{\pis{v_k|u_j} -\pit{v_k|u_j} }\cdot \omegam{u_i|u_j,v_k} 
    + 
    (\pis{v_k|u_j}  \beta_j+\pit{v_k|u_j}  \alpha_j)
    \abs{\omegas{u_i|u_j,v_k}
    - 
    \omegat{u_i|u_j,v_k}}}
    \tag{a}
    \\
     &=\sum_{i\neq j} \gammat{u_j} \sum_{k\in\mathcal{Y}}
    \abs{\pis{v_k|u_j} -\pit{v_k|u_j} }\cdot \omegas{u_i|u_j,v_k} 
    + 
    \sum_{i\neq j} \gammat{u_j}\sum_{k\in\mathcal{Y}}
   \pit{v_k|u_j}\
    \abs{\omegas{u_i|u_j,v_k}
    - 
    \omegat{u_i|u_j,v_k}}
    \tag{b}
    \\
     &=\sum_{i\neq j} \gammat{u_j}  \sum_{k\in\mathcal{Y}}
    \pis{v_k|u_j}\abs{1 -\frac{\pit{v_k|u_j}}{\pis{v_k|u_j}} }\cdot \omegas{u_i|u_j,v_k} 
    + 
    \sum_{i\neq j} \gammat{u_j}\sum_{k\in\mathcal{Y}}
   \pit{v_k|u_j}\
    \abs{\omegas{u_i|u_j,v_k}
    - 
    \omegat{u_i|u_j,v_k}}
    \\    
    &\leq \sum_{i\neq j} \gammat{u_j}  \rpar*{\max \abs{1-\frac{\probt(Y_v=k|Y_u=j,v\in \neighbor_u)}{\probs(Y_v=k|Y_u=j,v\in \neighbor_u)}}} \cdot \probs(\predy_u=i| \rvY_u=j) 
    + 
    \sum_{i\neq j} \gammat{u_j}\sum_{k\in\mathcal{Y}}
   \pit{v_k|u_j}\
    \abs{\omegas{u_i|u_j,v_k}
    - 
    \omegat{u_i|u_j,v_k}}    
    \tag{c}\\
    &\leq \E^\gT\spar*{\rpar*{\max \abs{1-\frac{\probt(Y_v=k|Y_u=j,v\in \neighbor_u)}{\probs(Y_v=k|Y_u=j,v\in \neighbor_u)}}}}  \balerrorrate{\predy_u}{\rvY_u}
    + 
    \sum_{i\neq j} \gammat{u_j}\sum_{k\in\mathcal{Y}}
   \pit{v_k|u_j}\
    \abs{\omegas{u_i|u_j,v_k}
    - 
    \omegat{u_i|u_j,v_k}}
    \\
    &\leq \E_{Y_u}^\gT\spar*{\rpar*{\max \abs{1-\frac{\probt(Y_v=k|Y_u=j,v\in \neighbor_u)}{\probs(Y_v=k|Y_u=j,v\in \neighbor_u)}}}}  \balerrorrate{\predy_u}{\rvY_u}
    + 
    \max_{i\
    \neq j}\E^\gT\spar*{|\omegas{u_i|u_j,v_k} - \omegat{u_i|u_j,v_k} |}
    \\
    &\begin{aligned}[t]
    \leq \E_{Y_u}^\gT&\spar*{\rpar*{\max \abs{1-\frac{\probt(Y_v=k|Y_u=j,v\in \neighbor_u)}{\probs(Y_v=k|Y_u=j,v\in \neighbor_u)}}}}  \balerrorrate{\predy_u}{\rvY_u}\\
    &+ \max_{i\neq j}\E^\gT_{\rvY_{v}}\spar{|\probs(\predy_u=i| \rvY_u=j, \rvY_{v}=k, v\in \neighbor_u)
    - \probt(\predy_u=i| \rvY_u=j, \rvY_{v}=k, v\in \neighbor_u) |}
    \end{aligned}
    \\
    &\leq \E_{Y_u}^\gT\spar*{\rpar*{\max \abs{1-\frac{\probt(Y_v=k|Y_u=j,v\in \neighbor_u)}{\probs(Y_v=k|Y_u=j,v\in \neighbor_u)}}}}  \balerrorrate{\predy_u}{\rvY_u} + \Delta_{CE} \tag{d}
\end{align*}

(a) comes from Lemma \ref{error decomposition lemma}. (b) comes from setting $\alpha_j=1$ and $\beta_j=0, \forall j \in \mathcal{Y}$
(c) The inequality comes from Holder's inequality, where we take the maximum-norm for $k$ and take the $L_1$ norm to marginalize $Y_v$: $\probs(\predy_u=i| \rvY_u=j) =\sum_{k\in\mathcal{Y}}  \probu(\rvY_{v}=k | \rvY_u=j, v\in \neighbor_u )\probu(\predy_u=i| \rvY_u=j, \rvY_{v}=k, v\in \neighbor_u)$
(d) The last term in the previous line corresponds to the conditional error gap in \cite{tachet2020domain} but in a standard GNN's message passing fashion.

The term $\probu(\predy_u=i| \rvY_u=j, \rvY_{v}=k, v\in \neighbor_u)$ can be interpreted as the probability of error prediction using a GNN encoder.
By definition, $Y_v$ represents a random variable of the neighborhood label ratio and $Y_u$ denotes a random variable of self class label, their realization corresponds to the expectation of the neighboring nodes’ label distribution and the label distribution of the central node itself.
As a result, $\Delta_{CE}$ only exists if and only if feature shift exists. Putting everything together, we have





\vspace{-6mm}
\begin{align*}
&\abs{\errors(g \circ \phi) - \errort(g \circ \phi)} 
\\
&\leq  
\balerrorrate{\predy_u}{\rvY_u}\cdot
    \cbrace{\underbrace{
    \rpar{2TV(\probs(\rvY_u), \probt(\rvY_u)}
    }_{\text{Label Shift}}
    +\underbrace{\E_{Y_u}^\gT\spar{\max_{k\in \gY} \abs{1-\frac{\probt(Y_v=k|Y_u,v\in \neighbor_u)}{\probs(Y_v=k|Y_u,v\in \neighbor_u)}}}  }_{\textbf{Conditional Structure Shift}}}
    + \underbrace{\Delta_{CE}}_{\text{Feature Shift}}
    \vspace{-2mm}
\end{align*}

\end{proof}


\subsection{Finite-Sample Guarantees of Theorem 3.3}
\label{app:finite_sample}

To extend the theoretical foundation of Theorem 3.3, we relax the asymptotic assumption ($|\mathcal{V}| \rightarrow \infty$) and provide a finite-sample guarantee. We demonstrate that for finite graphs consisting of independent ego-networks, the empirical error gap is bounded by the theoretical shift terms plus a statistical variance term that decays with sample size.

Let $\epsilon^{\mathcal{S}}$ and $\hat{\epsilon}^{\mathcal{S}}$ denote the expected and empirical risk on the source, respectively (and similarly for target $\mathcal{T}$). By decomposing the error gap via triangle inequality, we have:
\begin{align*}
    |\hat{\epsilon}^{\mathcal{S}} - \hat{\epsilon}^{\mathcal{T}}| \le \underbrace{|\hat{\epsilon}^{\mathcal{S}} - \epsilon^{\mathcal{S}}|}_{\text{Source Est. Error}} + \underbrace{|\epsilon^{\mathcal{S}} - \epsilon^{\mathcal{T}}|}_{\text{Distribution Shift (Thm 3.3)}} + \underbrace{|\epsilon^{\mathcal{T}} - \hat{\epsilon}^{\mathcal{T}}|}_{\text{Target Est. Error}}
\end{align*}

The middle term, $|\epsilon^{\mathcal{S}} - \epsilon^{\mathcal{T}}|$, is strictly bounded by the Label, Neighborhood, and Feature shift components derived in Theorem 3.3. We bound the estimation error terms using McDiarmid's Inequality. Following the formulation in Theorem 3.3, we assume the graph $\mathcal{G}$ consists of $M$ independent samples (ego-networks) $\{Z_1, Z_2, ..., Z_M\}$, where $M = |\mathcal{V}|$. Let the empirical error be a function $f: \mathcal{Z}^M \rightarrow [0, 1]$ defined as the average 0-1 loss over the nodes:
\begin{align*}
    f(Z_1, ..., Z_M) = \frac{1}{M}\sum_{i=1}^{M} \mathbb{I}(g(Z_i) \ne y_i)
\end{align*}

Since the loss function $\mathbb{I}(\cdot)$ is bounded within $[0, 1]$, changing a single ego-network $Z_i$ to $Z_i^\prime$ changes the value of $f$ by at most $\frac{1}{M}$. Thus, applying McDiarmid's Inequality, with probability at least $1-\delta$, the estimation error is bounded by:
\begin{align*}
    \eta = \sqrt{\frac{\ln(2/\delta)}{2|\mathcal{V}|}}
\end{align*}

Applying this bound to both the source and target estimation errors, we conclude that with probability at least $1 - 2\delta$:
\begin{align*}
    |\hat{\epsilon}^{\mathcal{S}} - \hat{\epsilon}^{\mathcal{T}}| \le \underbrace{\text{Bound}_{\text{Thm 3.3}}}_{\text{Shift Components (CSS, Label, Feature)}} + \underbrace{\sqrt{\frac{\ln(2/\delta)}{2|\mathcal{V}^{\mathcal{S}}|}} + \sqrt{\frac{\ln(2/\delta)}{2|\mathcal{V}^{\mathcal{T}}|}}}_{\text{Finite Sample Noise}}
\end{align*}
This shows that for finite graphs, the error decomposition in Theorem 3.3 holds, subject to a standard complexity term $\mathcal{O}(1/\sqrt{|\mathcal{V}|})$ that vanishes as graph size increases.

\subsection{Justification of the Ego-Network Assumption}
\label{app:ego_network}

Theorem 3.3 assumes that the joint distribution of the entire graph can be factorized into the product of local marginal distributions centered at each node. Namely, it treats the graph as a ``bag'' of independent training examples. While we agree that this assumption is not fully satisfied in real-world graphs, it is grounded in practical graph learning properties:
\begin{itemize}
    \item \textbf{Decaying Correlation:} In many physical and social networks, the correlation between nodes decays as the graph distance increases. The influence of a distant node on a center node is statistically negligible compared to the influence of immediate neighbors.
    \item \textbf{Local Dominance:} The assumption captures the core characteristic of the data generation process for node classification: the immediate structural context (the ego-network) contains the highest density of discriminatory signals. Therefore, modeling the data as a collection of local environments is a justifiable approximation for analyzing distribution shifts, even if it ignores weaker high-order dependencies.
\end{itemize}
Our empirical results demonstrate that while the independence assumption is a simplification, it successfully captures the dominant factors driving distribution shift in practice on real graphs.

\subsection{Clarification on Feature Shift ($\Delta_{CE}$)}
\label{app:feature_shift}

\textbf{Feature Shift Definition:} We define Feature Shift as a divergence in the conditional feature distribution given the node label, expressed as $\mathbb{P}^{\mathcal{S}}(X_u|Y_u) \ne \mathbb{P}^{\mathcal{T}}(X_u|Y_u)$.

\textbf{Correspondence to $\Delta_{CE}$:} In Appendix \ref{app:thm33_proof}, $\Delta_{CE}$ is derived as the discrepancy in prediction probabilities between domains, conditioned on both the center label $Y_u$ and the neighborhood labels $Y_v$. Because we condition on the label topology and the pre-trained model parameters are frozen, the only variable remaining that can alter the prediction probability is the input feature distribution $X$. Thus, $\Delta_{CE}$ isolates the error gap caused specifically by feature shift.


    



\section{DISCUSSION OF USING SOURCE DOMAIN INFORMATION IN GTTA}

In this section, we delve deeper into the role of source domain information within GTTA setting. We argue that a truly "source-agnostic" approach is fundamentally limited and that effective adaptation necessitates leveraging, at minimum, summary statistics or assumptions from the source domain. We will illustrate this point through conceptual examples, theoretical arguments, and empirical results, demonstrating how explicitly accounting for source-side properties leads to more robust and reliable adaptation.

\subsection{The Necessity of Source Information as an Anchor for Adaptation}
We clarify a fundamental point: a purely \emph{source-agnostic} approach to GTTA is inherently limited. Effective adaptation requires anchoring the process with either explicit source statistics or an implicit assumption about the source domain (e.g., a clustering assumption). Attempting adaptation without such an anchor cannot sufficiently correct for the inherent biases learned during pre-training.

\paragraph{Illustrative Example (Fraud Detection).}
Consider two different models for financial fraud detection, both adapting to the same target domain of \emph{new mobile payment transactions}:
\begin{itemize}
  \item \textbf{Model A} is pre-trained on \emph{Source A (E-commerce Data)}, where fraud is typically characterized by \emph{high-frequency, low-value} transactions. This model is thus biased toward that pattern.
  \item \textbf{Model B} is pre-trained on \emph{Source B (Wire Transfer Data)}, where fraud more often appears as \emph{single, high-value} anomalous transfers. This model carries a different bias.
\end{itemize}
A single, source-agnostic TTA rule would apply the same adaptation logic to both models and therefore fail to correct their distinct pre-existing biases. In contrast, a method like \proj that uses explicit source statistics performs \emph{tailored} adaptation, counterbalancing each model’s specific bias. This intuition is formalized by our analysis and validated empirically.

\paragraph{Theoretical Demand.}
Theorem~\ref{theory:decomposeerror} decomposes the generalization error and makes explicit its dependence on both source and target shifts. The bound implies that to reduce this error, adaptation must account for properties of the source distribution.


\paragraph{Lightweight Information.}
The required source summary is extremely lightweight: For \proj, a single table of probabilities that consumes negligible space and can be shipped alongside model parameters.

\subsection{The Implicit Source Assumptions in "Source-Agnostic" Methods}

\begin{table*}[h!]
\centering
\caption{GTTA performance when the source cluster assumption is satisfied (well‐separated clusters) and when it is violated (highly imbalanced source clusters) on synthetic datasets, with the target domain held constant.}
\vspace{-4mm}
\label{tab:assum_violate}
  \begin{center}
  \resizebox{0.9\textwidth}{!}{%
    \small
      \begin{tabular}{lccccccc}
        \toprule
        & ERM & GTrans & HomoTTT& SOGA
        & TSA-TENT & TSA-LAME & TSA-T3A \\
        \midrule
         Assumption \textbf{satisfied}&
        $82.70\pm4.45$ & $86.67\pm3.59$ & $85.08\pm2.87$ & $86.09\pm3.89$ &
        $88.78\pm1.37$ & $88.96\pm1.66$ & $\mathbf{89.96\pm1.33}$ \\
        \midrule
         Assumption \textbf{violated}&
        $64.38\pm4.25$ & $62.30\pm3.20$ & $60.44\pm4.95$ & $60.07\pm6.16$ &
        $\mathbf{74.97\pm5.41}$ & $70.77\pm4.76$ & $\mathbf{74.97\pm3.41}$ \\
        \bottomrule
      \end{tabular}
    }
  \end{center}
\end{table*}

As demonstrated in our experiments (Table~\ref{tab:assum_violate}), methods often regarded as source-agnostic (e.g., GTrans, HomoTTT, SOGA) rely critically on implicit structural assumptions about the source domain. In contrast, $\textsc{TSA}$ is more robust when those assumptions do not hold. Every method fails to transfer (accuracy decreases) except TSA.

\subsection{\proj Robustness in the Absence of Explicit Source Statistics}

\begin{table*}[h!]
\centering
\caption{Performance on MAG for US$\rightarrow$CN (top) and CN$\rightarrow$US (bottom) when using a uniform distribution for the source prior. \textbf{Bold} indicates improvements in comparison to the corresponding non-graph TTA baselines. \underline{Underline} indicates the best model.}
\vspace{-2mm}
\label{tab:no_source}
\resizebox{0.9\textwidth}{!}{%
\small
\begin{tabular}{lcccccccccc}
\toprule
ERM & GTrans & HomoTTT& SOGA
& TENT &LAME &T3A
& TSA-TENT & TSA-LAME & TSA-T3A \\
\midrule
$31.86\pm0.83$ 
& $31.77\pm0.91$ & $25.89\pm1.98$ & $21.54\pm2.52$ 
&26.72$\pm$1.33 &35.75$\pm$0.85 &41.47$\pm$1.15

&\textbf{27.24$\pm$1.51} & \textbf{37.83$\pm$1.30} & \underline{\textbf{41.56$\pm$1.04}} \\
\midrule
$37.41\pm1.01$
& $36.17\pm0.89$ & $24.39\pm2.12$ & $38.95\pm3.35$
& $27.21\pm0.88$ & $40.08\pm1.13$ & $46.50\pm1.26$
& \textbf{27.82$\pm$0.88} & \textbf{44.72$\pm$0.73} & \underline{\textbf{47.98$\pm$0.89}} \\

\bottomrule
\end{tabular}}
\end{table*}

Even for third-party models where source statistics are unknown, \proj remains effective. Assuming a simple uniform source prior, our method still delivers consistent improvements on real-world datasets, as shown in Table~\ref{tab:no_source}. This is because the alignment is then driven primarily by target-graph statistics, providing useful regularization for an SNR-inspired adjustment.

Together, these arguments support the central claim: \emph{effective GTTA requires lightweight source-domain information or, equivalently, explicit assumptions about it}. Methods that pretend to be fully source-agnostic inevitably encode—and are limited by—implicit source assumptions; making such information explicit enables more reliable, bias-aware adaptation.

\section{ADDITIONAL RESULTS}
\label{app:additional results}

\subsection{TSA Ablation Studies}
\label{app:tsa_ablation_study}

We conduct an ablation study to examine the individual contributions of neighborhood alignment and SNR-based representation adjustments. For this analysis, we use synthetic datasets with known structural shifts and the real-world datasets exhibiting the highest conditional structure shift (CSS), as detailed in Table~\ref{table:magstats}. 

The results, presented in Table~\ref{table:ablation_all}, consistently show that incorporating both modules yields the best performance. Removing neighborhood alignment leads to a significant performance drop of up to 19.9\% on the synthetic dataset and 3.16\% on the real-world dataset. The larger drop on the synthetic CSBM data is expected, given the more significant structural shift modeled. Meanwhile, removing SNR adjustments results in a performance decrease of up to 2.43\% and 2.97\%, respectively. These findings underscore that both modules are crucial for handling structure shift.




\begin{table}[h]
\scriptsize
\caption{TSA Ablation studies on CSBM and MAG.}
\label{table:ablation_all}
\begin{center}
\resizebox{\textwidth}{!}{%
\begin{tabular}{c|l|cc|cc}
\toprule 
Method &Ablation 
&\multicolumn{2}{c}{Struct. Shift (Imbal.$\rightarrow$ Bal.)} 
& US$\rightarrow$RU & US$\rightarrow$FR\\
 \midrule
\multirow{3}{*}{TSA-TENT} &\cellcolor{red!10}Base   
&\cellcolor{red!10}88.68$\pm$4.99
&\cellcolor{red!10}66.25$\pm$7.75
&\cellcolor{red!10}32.53$\pm$0.91 &\cellcolor{red!10}28.62$\pm$0.63
\\
&w/o Nbr. Align 
&78.45$\pm$5.51 &61.12$\pm$4.86 
&32.26$\pm$0.95 &28.41$\pm$0.59
 \\
&w/o SNR
&86.93$\pm$5.05 &63.82$\pm$8.06
 &32.52$\pm$0.90 &28.57$\pm$0.59

\\
 \midrule
\multirow{3}{*}{TSA-LAME} &\cellcolor{red!10}Base  
&\cellcolor{red!10}65.09$\pm$2.34
&\cellcolor{red!10}52.89$\pm$6.11
&\cellcolor{red!10}32.86$\pm$2.23 
 &\cellcolor{red!10}27.22$\pm$1.48
\\
&w/o Nbr. Align 
&48.45$\pm$5.30 &37.31$\pm$3.95
 &32.70$\pm$2.29 &26.49$\pm$1.76
 \\
 &w/o SNR 
  &63.65$\pm$2.65 &51.18$\pm$5.84
   &29.89$\pm$1.94 &24.41$\pm$1.43 
\\
 \midrule
\multirow{3}{*}{TSA-T3A} &\cellcolor{red!10}Base
&\cellcolor{red!10}65.59$\pm$2.57
&\cellcolor{red!10}52.34$\pm$7.19
 &\cellcolor{red!10}46.61$\pm$0.88 &\cellcolor{red!10}43.45$\pm$0.81
 \\
 &w/o Nbr. Align 
  &45.69$\pm$5.44 &36.75$\pm$3.09
   &46.44$\pm$0.80 &42.09$\pm$1.26
 \\
 &w/o SNR 
  &64.17$\pm$2.11 &51.04$\pm$6.79
   &46.51$\pm$0.94 &41.35$\pm$1.09
 \\
\bottomrule
\end{tabular}
}
\end{center}
\end{table}

\newpage
\subsection{Visualization of Distribution Shift}

Figure~\ref{fig:diff_shift} (a)-(e) investigates the impact of different types of distribution shifts on GNNs.
We use a one-layer GraphSAGE followed by an MLP classifier to visualize node representations on synthetic datasets generated by the CSBM model.

\begin{itemize}
\item \textbf{Feature Shift:} Figure~\ref{fig:diff_shift} (b) shows that the representations deviate from the source domain, but the clusters remain separable.
\item \textbf{Conditional Structure Shift (CSS):} Figure~\ref{fig:diff_shift} (c) shows that the embeddings largely overlap, making the representation space less discriminative.
\item \textbf{Label Shift:} Figure~\ref{fig:diff_shift} (d) illustrates a biased decision boundary, which leads to misclassification of the minority class.
\item \textbf{SNR Shift:} Figure~\ref{fig:diff_shift} (e) shows that the node representations remain roughly aligned with the source, but are more dispersed, as the SNR shift introduces noise into the representations.
\end{itemize}

\begin{figure*}[h]
\centering
\begin{tabular}{c|cccc}
(a) Source &(b) Feature Shift & (c) CSS  &(d) Label Shift & (e) SNR Shift \\
\includegraphics[width=0.17\textwidth]{figures/aggre_src.pdf} &
\includegraphics[width=0.17\textwidth]{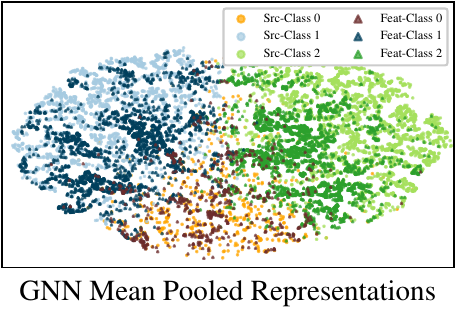} &
\includegraphics[width=0.17\textwidth]{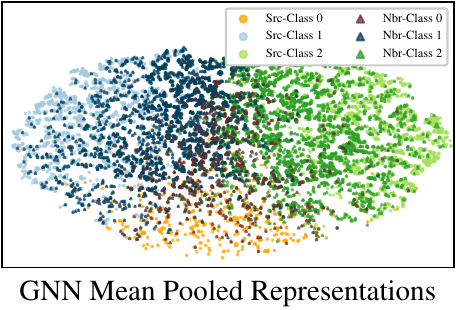} &
\includegraphics[width=0.17\textwidth]{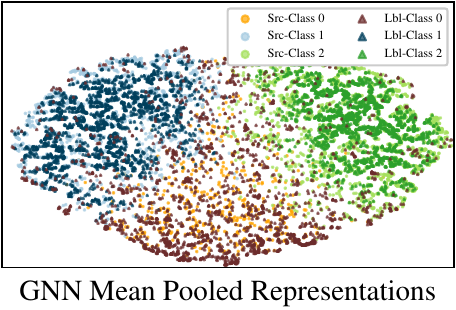} &
\includegraphics[width=0.17\textwidth]{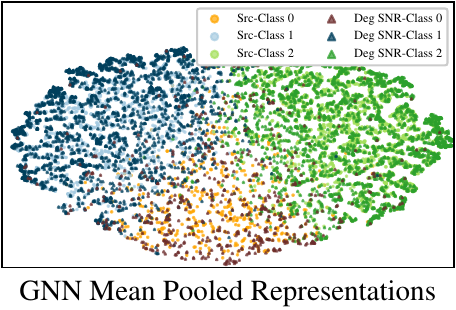} \\

\includegraphics[width=0.17\textwidth]{figures/output_src.pdf} &
\includegraphics[width=0.17\textwidth]{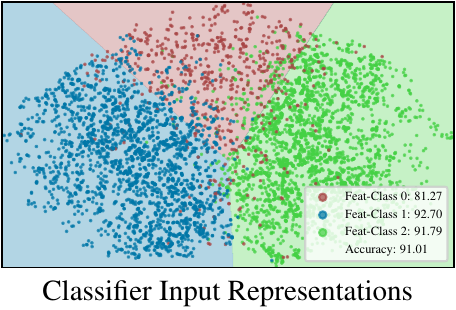} &
\includegraphics[width=0.17\textwidth]{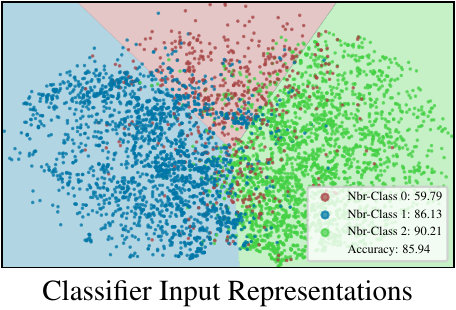} &
\includegraphics[width=0.17\textwidth]{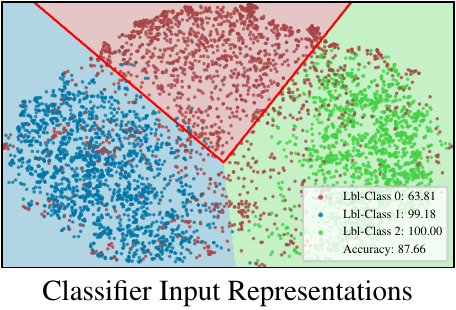} &
\includegraphics[width=0.17\textwidth]{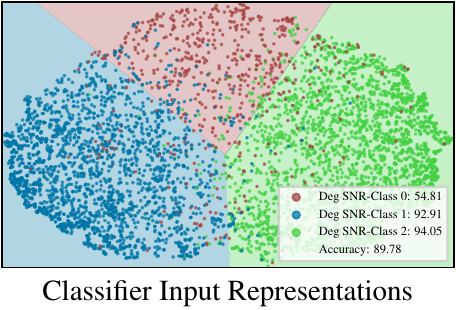} \\
\end{tabular}
\caption{The t-SNE visualization of a one-layer GraphSAGE under different distribution shifts: (top) the output representations given by the GNN encoder and (bottom) the node representations with the classifier decision boundaries. 
(a) indicates the source domain where the model is pretrained. 
(b), (c), and (d) stand for feature shift, conditional structure shift (CSS), and label shift. 
(e) represent the impact of SNR shift induced by degree changes.
The color of the nodes represents the ground-truth labels.
The legends in the bottom plots show the accuracy and recall scores for each class.
The red contours in the bottom (d) highlight the decision boundary of the minority class.
}
\label{fig:diff_shift}
\end{figure*}

\subsection{Computational Efficiency}

\begin{table}[ht]
\vspace{-2mm}
\scriptsize
\caption{Computation time on MAG (CN $\rightarrow$ US). TTA methods are recorded in adaptation time per epoch. The Wall-Clock Time (seconds) are measured on a single NVIDIA RTX A6000 GPU.}
\label{tab:computation_time}
\begin{center}

\begin{adjustbox}{width = 0.8\textwidth}
\begin{tabular}{llcc}
\toprule
Method  & Stage & Computation Time (sec) & Additional Time (\%) \\
\midrule
 --  & Inference   & 0.518$\pm$0.038   & --                    \\
\midrule
TSA-T3A     & Adaptation       
& 1.138$\pm$0.072
& 219.69\%                        \\
&  \hspace{1mm} - Neighborhood Alignment           
&0.037$\pm$0.011    
& 7.14\%                     
\\
&  \hspace{1mm} - SNR Adjustment   &0.729$\pm$0.023      
& 140.73\%                        
\\
&  \hspace{1mm} - Boundary Refinement (T3A)     
&0.368$\pm$0.046      
& 71.04\%                        \\
\midrule
GTrans      & Adaptation   & 4.309$\pm$0.171    & 831.85\%                         \\
\midrule
HomoTTT   & Adaptation &2.014$\pm$0.815 &388.80\% \\
\midrule
SOGA        & Adaptation  & 9.264$\pm$0.218   & 1788.42\%                       \\

\bottomrule
\end{tabular}
\end{adjustbox}
\end{center}
\end{table}

We quantify the computation time of TSA when adapting GraphSAGE on the MAG US dataset, containing the largest number of nodes and edges, and compare it with other GTTA methods:
\begin{itemize}
    \item \textbf{Inference time} refers to the time required for the model to generate predictions on the target graph $\gG^{\gT}$. This is also equivalent to the computation time for ERM.

    \item \textbf{Adaptation time} refers to the time required per epoch \emph{after} inference. It reflects the additional overhead introduced by the adaptation methods.

\end{itemize}

For a fair comparison, we report the computation time per epoch of adaptation. Note that GTrans alternates between optimizing adjacency structure and node features; in this case, we report the combined runtime of one graph feature update and one edge update as a single adaptation epoch.
Table~\ref{tab:computation_time} shows that TSA is more efficient than other GTTA baselines.
%
Notably, compared to GTTA baselines, \proj-T3A is approximately 2, 4 and 8 times more efficient than HomoTTT, GTrans and SOGA.
The time complexity of neighborhood alignment is $O(1)$ as it is a one-step edge weight assignment.
SNR adjustment updates a lightweight MLP and scalar bias terms.
While it requires backpropagation, it only introduces minimal overhead compared to full model retraining or extensive graph augmentation.


\subsection{Hyperparameter Sensitivity}
\label{app:hyperparameter_sensitivity}

\textbf{Hyperparameter analysis on $\rho_1$.} \proj employs neighborhood alignment based on the uncertainty of target pseudo labels. To evaluate the sensitivity of the hyperparameter $\rho_1$ (as defined in Section~\ref{subsec:1stalign} by the entropy threshold $-\sum_{i\in \gY}[\hat{y}]_i\ln([\hat{y}]_i)\leq\rho_1 \cdot \ln ( |\gY|)$), we vary $\rho_1$ within the range of $[-0.1, 0.1]$ centered around the value selected on the validation set. We set $\rho_1=1.0$ for the synthetic datasets and hence we vary it within $[0.9, 1.0]$. We perform evaluations on both synthetic and real-world datasets, using the best-performing variants: \proj-TENT, \proj-T3A and \proj-LAME for each case. \proj-TENT and \proj-T3A are evaluated with the GraphSAGE backbone, while \proj-LAME is evaluated with the GPRGNN backbone.
Overall, Figure~\ref{fig:hyper_rho1} 
shows that all tested values of $\rho_1$ outperform the corresponding non-graph TTA baselines.

\textbf{Hyperparameter analysis on $\rho_2$.}
When optimizing $\alpha$ using the cross-entropy loss in Eq.~\ref{eq:loss}, real-world datasets often produce unreliable hard pseudo-labels.
To address this, we adopt the same strategy used for selecting $\rho_1$. 
Only hard pseudo-labels whose entropy satisfies $-\sum_{i\in \gY}[\hat{y}]_i\ln([\hat{y}]_i)\leq\rho_2 \cdot \ln ( |\gY|)$ are used to optimize $\alpha$.
To further analyze the effect of $\rho_2$, we vary it within the range $[-0.1, 0.1]$ centered around the value selected on the validation set.
We set $\rho_2=1.0$ for the synthetic datasets and hence we vary it within $[0.9, 1.0]$.
Figure~\ref{fig:hyper_rho2} shows that ~\proj is robust to variations in $\rho_2$, with all perturbations outperforming the corresponding non-graph TTA baselines.

\begin{figure*}[h]
\centering
\begin{tabular}{ccc}
(a) CSBM (Struct. Shift) &(b) MAG (US $\rightarrow$ FR)  &(c) Pileup (PU10$\rightarrow$30) \\
\includegraphics[width=0.30\textwidth]{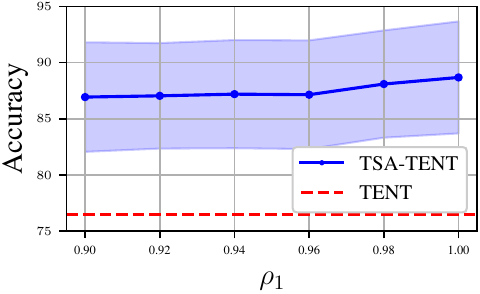}
&\includegraphics[width=0.30\textwidth]{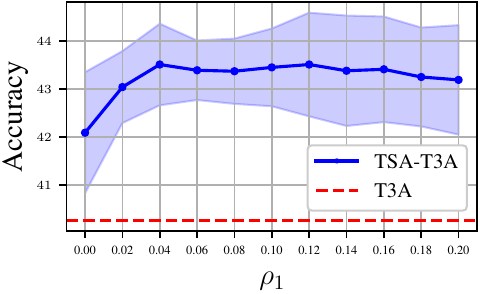}
&\includegraphics[width=0.30\textwidth]{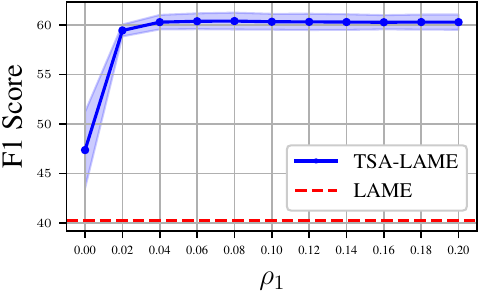} \\
\end{tabular}
\vspace{-2mm}
\caption{
Hyperparameter sensitivity analysis of $\rho_1$.}
\label{fig:hyper_rho1}
\vspace{-2mm}
\end{figure*}

\begin{figure*}[h]
\centering
\begin{tabular}{ccc}
(a) CSBM (Struct. Shift) &(b) MAG (US $\rightarrow$ FR)  &(c) Pileup (PU10$\rightarrow$30) \\
\includegraphics[width=0.30\textwidth]{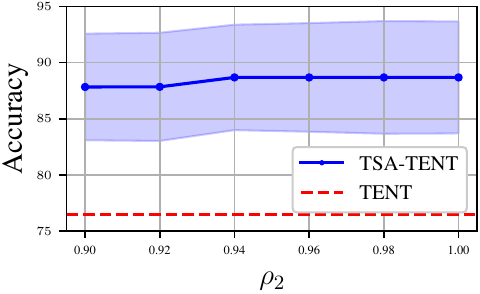}
&\includegraphics[width=0.30\textwidth]{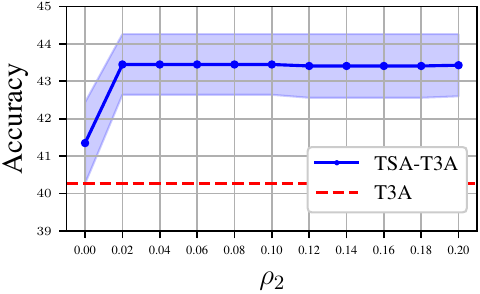}
&\includegraphics[width=0.30\textwidth]{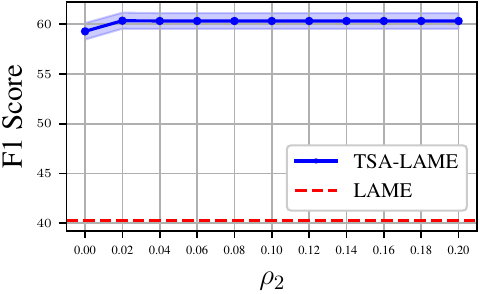}
 \\
\end{tabular}
\vspace{-2mm}
\caption{
Hyperparameter sensitivity analysis of $\rho_2$.}
\label{fig:hyper_rho2}
\vspace{-2mm}
\end{figure*}

\newpage

\begin{table}[t]
\scriptsize
\caption{GPRGNN results on MAG (accuracy). \textbf{Bold} indicates improvements in comparison to the corresponding non-graph TTA baselines. \underline{Underline} indicates the best model.}
\label{table:gprgnn_mag}
\begin{center}
\resizebox{\textwidth}{!}{%
\begin{tabular}{lcccccccccc}
\toprule
\textbf{Method} & US$\rightarrow$CN & US$\rightarrow$DE & US$\rightarrow$JP & US$\rightarrow$RU & US$\rightarrow$FR & CN$\rightarrow$US &CN$\rightarrow$DE & CN$\rightarrow$JP & CN$\rightarrow$RU & CN$\rightarrow$FR\\
\midrule
ERM &26.26$\pm$1.53 &27.09$\pm$0.74 &38.90$\pm$1.06 &22.88$\pm$1.14 &20.69$\pm$0.64 &27.58$\pm$0.55 &10.77$\pm$0.55 &17.41$\pm$0.86 &12.49$\pm$0.74 &9.05$\pm$0.42 
\\
GTrans &26.04$\pm$1.41 &26.96$\pm$0.72 &38.71$\pm$0.96 &23.07$\pm$1.08 &20.80$\pm$0.70 &27.49$\pm$0.46 &10.97$\pm$0.56 &17.41$\pm$0.78 &12.58$\pm$0.75 &9.08$\pm$0.41 \\
HomoTTT
&26.08$\pm$1.35 &28.12$\pm$0.85 &39.13$\pm$1.10 &24.16$\pm$1.17 &21.85$\pm$0.64 &25.50$\pm$0.39 &9.64$\pm$0.41 &15.95$\pm$0.62 &11.10$\pm$0.61 &8.08$\pm$0.31 
\\
SOGA &34.70$\pm$0.71 &35.18$\pm$0.33 &48.47$\pm$0.30 &33.63$\pm$0.73 &27.98$\pm$0.42 &35.16$\pm$0.57 &18.50$\pm$0.45 &28.35$\pm$1.04 &20.11$\pm$0.71 &14.73$\pm$0.58 
\\

ActMAD
&25.14$\pm$1.41 &26.77$\pm$0.69 &37.97$\pm$1.09 &22.30$\pm$0.99 &20.52$\pm$0.61 &26.89$\pm$0.57 &10.35$\pm$0.52 &16.90$\pm$0.85 &12.03$\pm$0.77 &8.71$\pm$0.42
\\
TENT &13.79$\pm$0.72 &25.67$\pm$0.50 &29.13$\pm$0.80 &18.00$\pm$0.14 &20.85$\pm$0.26 &16.29$\pm$0.82 &7.59$\pm$0.73 &13.81$\pm$0.74 &7.82$\pm$0.74 &5.94$\pm$0.34 

\\
LAME &24.54$\pm$2.55 &22.50$\pm$0.85 &35.93$\pm$1.42 &17.61$\pm$1.46 &15.71$\pm$0.92 &28.02$\pm$0.55 &9.08$\pm$0.61 &15.58$\pm$1.02 &10.87$\pm$0.70 &8.01$\pm$0.38 
\\
T3A &40.58$\pm$0.23 &39.00$\pm$0.85 &48.96$\pm$1.00 &46.13$\pm$1.56 &32.35$\pm$1.29 &38.92$\pm$2.23 &38.07$\pm$1.45 &46.23$\pm$1.73 &42.82$\pm$1.52 &28.65$\pm$0.65 
\\
\midrule
Matcha-TENT &\textbf{13.90$\pm$0.71} &\textbf{25.87$\pm$0.47} &\textbf{29.30$\pm$0.82} &\textbf{18.14$\pm$0.19} &\textbf{21.03$\pm$0.26} &\textbf{16.38$\pm$0.80} &\textbf{7.62$\pm$0.73} &\textbf{13.84$\pm$0.74} &\textbf{7.85$\pm$0.81} &\textbf{5.97$\pm$0.36}
\\
Matcha-LAME &8.93$\pm$4.76 &8.18$\pm$0.78 &15.46$\pm$1.34 &6.25$\pm$1.43 &6.69$\pm$0.81 &1.35$\pm$2.69 &\textbf{17.60$\pm$8.89} &\textbf{33.42$\pm$2.41} &\textbf{18.81$\pm$1.42} &4.23$\pm$8.02 
\\
Matcha-T3A &22.34$\pm$18.21 &32.18$\pm$1.70 &42.85$\pm$1.15 &34.77$\pm$2.62 &25.75$\pm$0.62 &\underline{\textbf{45.27$\pm$2.57}} &33.01$\pm$1.17 &42.08$\pm$0.82 &38.57$\pm$1.50 &\underline{\textbf{29.30$\pm$0.47}}
\\
\midrule
TSA-TENT 
 &\textbf{14.51$\pm$0.66} &\textbf{25.86$\pm$0.54} &\textbf{29.47$\pm$0.73} &\textbf{18.76$\pm$0.39} &\textbf{21.48$\pm$0.23} &\textbf{16.96$\pm$0.74} &\textbf{7.89$\pm$0.72} &\textbf{14.32$\pm$0.70} &\textbf{8.35$\pm$0.81} &\textbf{6.16$\pm$0.38}

\\
TSA-LAME
&\textbf{26.58$\pm$2.09} &\textbf{22.75$\pm$0.85} &\textbf{35.95$\pm$0.98} &\textbf{19.30$\pm$1.99} &\textbf{17.49$\pm$0.79} &\textbf{32.30$\pm$0.67} &\textbf{14.86$\pm$1.12} &\textbf{21.88$\pm$0.95} &\textbf{15.02$\pm$0.86} &\textbf{11.13$\pm$0.51}

\\
TSA-T3A 
&\underline{\textbf{40.77$\pm$0.15}} &\underline{\textbf{39.03$\pm$0.86}} &\underline{\textbf{49.38$\pm$0.89}} &\underline{\textbf{47.81$\pm$1.81}} &\underline{\textbf{32.43$\pm$1.24}} &\textbf{40.94$\pm$2.82} &\underline{\textbf{38.12$\pm$1.42}} &\underline{\textbf{46.26$\pm$1.81}} &\underline{\textbf{43.70$\pm$1.33}} &\textbf{29.07$\pm$0.46}

\\
\bottomrule
\end{tabular}
}
\end{center}
\end{table}

\begin{table*}[h]
\vspace{-2mm}
\caption{GPRGNN results on Pileup (f1-scores). \textbf{Bold} indicates improvements in comparison to the corresponding non-graph TTA baselines. \underline{Underline} indicates the best model.}
\label{table:gprgnn_pileup}
\begin{center}
\resizebox{0.9\textwidth}{!}{%
\small
\begin{tabular}{lcccccccc}
\toprule
\textbf{Method} & PU10$\rightarrow$30 & PU30$\rightarrow$10 & PU10$\rightarrow$50 & PU50$\rightarrow$10 & PU30$\rightarrow$140 & PU140$\rightarrow$30 &gg$\rightarrow$qq & qq$\rightarrow$gg \\
\midrule
ERM 
&50.60$\pm$3.52 &68.09$\pm$0.74  &32.60$\pm$5.55 &64.59$\pm$1.16 &0.68$\pm$0.40 &24.95$\pm$5.32 &71.00$\pm$0.61 &69.56$\pm$1.06 

\\
GTrans 
&51.08$\pm$3.06 &66.26$\pm$1.77  &33.94$\pm$5.39 &62.53$\pm$1.11 &0.75$\pm$0.52 &25.46$\pm$5.19 &70.11$\pm$1.13 &68.40$\pm$1.93 

\\
HomoTTT
&52.05$\pm$3.09 &67.46$\pm$0.94 &35.84$\pm$4.98 &63.59$\pm$1.11 &0.75$\pm$0.43 &23.03$\pm$5.46 &71.06$\pm$0.50 &69.72$\pm$1.10  

\\
SOGA 
&54.18$\pm$3.08 &66.13$\pm$0.70  &42.30$\pm$4.74 &59.97$\pm$1.70 &1.16$\pm$0.84 &18.68$\pm$5.49 &69.72$\pm$1.27 &68.39$\pm$1.75 

\\
ActMAD 
&58.80$\pm$1.58 &62.22$\pm$0.95 &49.22$\pm$2.21 &56.04$\pm$2.24 &2.16$\pm$0.75 &17.81$\pm$6.45 &70.40$\pm$0.94 &69.95$\pm$1.02  

\\
TENT 
&36.64$\pm$0.15 &2.04$\pm$1.05 &27.01$\pm$0.06 &0.62$\pm$0.81 &12.74$\pm$0.50 &3.77$\pm$3.30 &60.19$\pm$3.55 &71.19$\pm$0.58

\\
LAME 
&46.33$\pm$4.67 &67.61$\pm$0.96 &22.58$\pm$9.42 &64.58$\pm$1.00 &0.00$\pm$0.00 &20.94$\pm$7.27 &70.72$\pm$0.74 &69.67$\pm$0.88

\\
T3A 
&55.66$\pm$2.89 &71.36$\pm$0.79 &47.89$\pm$4.19 &71.01$\pm$0.38 &19.69$\pm$3.72 &54.38$\pm$2.56 &70.34$\pm$1.33 &68.63$\pm$1.51

\\
\midrule
Matcha-TENT &36.58$\pm$0.17 &0.00$\pm$0.00 &27.01$\pm$0.06 &0.00$\pm$0.00 &9.17$\pm$0.94 &0.00$\pm$0.00 &0.72$\pm$0.76 &20.30$\pm$4.03  

\\
Matcha-LAME 
&7.45$\pm$3.55 &0.00$\pm$0.00 &8.28$\pm$4.16 &0.00$\pm$0.00 &0.00$\pm$0.00 &0.00$\pm$0.00 &0.52$\pm$1.05 &6.56$\pm$4.08  

\\
Matcha-T3A &15.48$\pm$2.91 &0.00$\pm$0.00 
 &14.15$\pm$1.86 &0.00$\pm$0.00 &0.00$\pm$0.00 &9.60$\pm$19.19 &17.20$\pm$14.72 &22.77$\pm$5.23 

\\
\midrule
TSA-TENT 
&\textbf{39.90$\pm$1.18} &\textbf{5.07$\pm$10.13} &\textbf{27.78$\pm$0.48} &0.59$\pm$0.75 &\textbf{20.31$\pm$0.65} &3.65$\pm$3.22 &\textbf{67.97$\pm$1.60} &\underline{\textbf{71.60$\pm$0.18}} 

\\
TSA-LAME 
&\underline{\textbf{60.33$\pm$0.80}} &\textbf{71.54$\pm$0.33} &\textbf{52.23$\pm$0.61} &\textbf{70.62$\pm$0.35} &\underline{\textbf{38.32$\pm$0.27}} &20.49$\pm$7.76 &\underline{\textbf{72.28$\pm$0.47}} &\textbf{71.23$\pm$0.27} 

\\
TSA-T3A
&\textbf{59.40$\pm$1.70} &\underline{\textbf{72.94$\pm$0.18}} &\underline{\textbf{53.13$\pm$0.34}} &\underline{\textbf{72.63$\pm$0.23}} &\textbf{36.39$\pm$1.48} &\underline{\textbf{62.31$\pm$0.28}} &\textbf{71.30$\pm$1.23} &\textbf{69.70$\pm$1.04} 

\\
\bottomrule
\end{tabular}
}
\end{center}
\end{table*}

\begin{table*}[h!]
\captionsetup{justification=centering}  
\caption{GPRGNN results on Arxiv and DBLP/ACM  (accuracy). \textbf{Bold} indicates improvements in comparison to the corresponding non-graph TTA baselines. \underline{Underline} indicates the best model.}
\label{table:gprgnn_arxiv}
\begin{center}
\resizebox{0.9\textwidth}{!}{%
\small
\begin{tabular}{lcccccccc}
\toprule
& \multicolumn{2}{c}{1950-2007} &\multicolumn{2}{c}{1950-2009} &\multicolumn{2}{c}{1950-2011} &\multicolumn{2}{c}{DBLP \& ACM}\\
\textbf{Method} &2014-2016 &2016-2018 &2014-2016 &2016-2018 &2014-2016 &2016-2018 & D$\rightarrow$A & A$\rightarrow$D\\
\midrule
ERM &45.62$\pm$2.30 &40.35$\pm$4.46 &49.21$\pm$1.36 &44.78$\pm$2.70 &56.49$\pm$0.97 &54.14$\pm$1.31 &67.34$\pm$1.21 &56.47$\pm$2.84 

\\
GTrans &45.25$\pm$2.25 &40.51$\pm$4.31 &48.56$\pm$1.23 &44.57$\pm$2.55 &55.81$\pm$0.99 &53.59$\pm$1.34 &59.61$\pm$0.98 &59.27$\pm$1.79 

\\
HomoTTT &45.49$\pm$2.64 &40.03$\pm$4.80 &48.10$\pm$1.17 &42.33$\pm$2.34 &56.27$\pm$1.03 &53.25$\pm$1.24 &67.11$\pm$1.03 &57.18$\pm$2.54

\\
SOGA &48.40$\pm$2.29 &46.01$\pm$4.00 &52.59$\pm$1.38 &50.64$\pm$1.63 &\underline{58.61$\pm$1.15} &\underline{57.68$\pm$0.88} &67.00$\pm$1.03 &55.42$\pm$3.03 

\\
ActMAD &45.47$\pm$2.28 &40.12$\pm$4.42 &49.42$\pm$1.31 &44.96$\pm$2.59 &56.63$\pm$1.00 &54.25$\pm$1.25 &67.07$\pm$1.11 &55.83$\pm$2.76 

\\

TENT &44.73$\pm$2.63 &40.58$\pm$3.44 &48.47$\pm$0.94 &42.37$\pm$1.52 &57.15$\pm$0.98 &52.45$\pm$1.75 &61.56$\pm$0.77 &56.84$\pm$1.04 

\\
LAME  &45.75$\pm$3.96 &41.35$\pm$7.20 &49.52$\pm$2.24 &50.18$\pm$3.36 &55.65$\pm$1.09 &54.93$\pm$1.72 &66.92$\pm$0.99 &56.58$\pm$2.83 

\\
T3A &32.73$\pm$1.63 &31.97$\pm$1.75 &36.99$\pm$2.01 &34.10$\pm$1.32 &40.53$\pm$2.88 &39.04$\pm$3.86 &67.21$\pm$1.31 &65.32$\pm$1.69 

\\
\midrule
Matcha-TENT &43.89$\pm$2.77 &39.30$\pm$3.63 &48.01$\pm$0.90 &41.91$\pm$1.66 &56.28$\pm$1.30 &51.27$\pm$1.64 &\textbf{64.43$\pm$0.90} &\underline{\textbf{78.61$\pm$1.38}}
\\
Matcha-LAME &32.93$\pm$3.30 &24.13$\pm$5.06 &37.41$\pm$2.06 &24.16$\pm$2.92 &48.29$\pm$3.27 &44.74$\pm$4.20 &\textbf{67.83$\pm$1.71} &\textbf{70.39$\pm$2.15}

\\
Matcha-T3A &24.81$\pm$2.32 &19.06$\pm$3.09 &25.85$\pm$3.54 &19.60$\pm$4.56 &26.90$\pm$1.55 &18.74$\pm$4.37 &\textbf{67.59$\pm$1.87} &\textbf{72.91$\pm$2.69}
\\
\midrule
TSA-TENT &\textbf{45.17$\pm$2.84} &\textbf{41.07$\pm$3.67} &\textbf{49.13$\pm$0.84} &\textbf{43.19$\pm$1.24} &\textbf{57.66$\pm$1.00} &\textbf{53.00$\pm$1.65} &\textbf{62.77$\pm$0.84} &\textbf{61.71$\pm$0.99}
\\
TSA-LAME &\underline{\textbf{49.75$\pm$4.12}} &\underline{\textbf{48.67$\pm$7.74}} &\underline{\textbf{53.82$\pm$0.53}} &\underline{\textbf{55.18$\pm$1.21}}&\textbf{57.67$\pm$0.98} &\textbf{57.14$\pm$1.83} &\textbf{67.71$\pm$1.10} &\textbf{65.20$\pm$1.70}
\\
TSA-T3A &\textbf{39.11$\pm$1.73} &\textbf{39.40$\pm$3.69} &\textbf{41.15$\pm$6.33} &\textbf{44.65$\pm$2.31} &\textbf{44.14$\pm$2.63} &\textbf{40.82$\pm$5.28} &\underline{\textbf{68.02$\pm$1.40}} &\textbf{74.64$\pm$1.38}
\\
\bottomrule
\end{tabular}
}
\end{center}
\vspace{10mm}
\end{table*}

\begin{table}[t]
\scriptsize
\caption{GCN results on MAG (accuracy). \textbf{Bold} indicates improvements in comparison to the corresponding non-graph TTA baselines. \underline{Underline} indicates the best model.}
\vspace{-2mm}
\label{table:gcn_mag}
\begin{center}
\resizebox{\textwidth}{!}{%
\begin{tabular}{lcccccccccc}
\toprule
\textbf{Method} & US$\rightarrow$CN & US$\rightarrow$DE & US$\rightarrow$JP & US$\rightarrow$RU & US$\rightarrow$FR & CN$\rightarrow$US &CN$\rightarrow$DE & CN$\rightarrow$JP & CN$\rightarrow$RU & CN$\rightarrow$FR\\
\midrule
ERM &27.76$\pm$1.49 &30.70$\pm$0.96 &40.12$\pm$0.73 &27.70$\pm$0.75 &23.03$\pm$0.80 &36.46$\pm$2.00 &21.03$\pm$1.56 &28.95$\pm$2.43 &20.53$\pm$2.07 &16.68$\pm$1.54 
\\
GTrans  &27.44$\pm$1.48 &30.59$\pm$0.99 &39.65$\pm$0.97 &27.79$\pm$0.72 &22.95$\pm$0.88 &35.88$\pm$2.09 &20.80$\pm$1.59 &28.13$\pm$2.49 &20.53$\pm$2.06 &16.52$\pm$1.51

\\
HomoTTT &23.64$\pm$2.09 &28.49$\pm$1.29 &35.34$\pm$1.87 &26.21$\pm$1.62 &23.78$\pm$0.56 &29.27$\pm$2.91 &18.60$\pm$1.59 &23.10$\pm$2.01 &21.71$\pm$1.70 &16.03$\pm$1.33 

\\
SOGA &28.36$\pm$2.23 &35.61$\pm$1.50 &45.29$\pm$2.08 &36.79$\pm$1.18 &29.30$\pm$0.90 &44.09$\pm$1.92 &26.78$\pm$1.26 &37.93$\pm$2.05 &33.24$\pm$1.49 &24.23$\pm$1.01 

\\
ActMAD
&28.11$\pm$1.34 &31.25$\pm$0.90 &40.71$\pm$0.65 &28.61$\pm$0.63 &23.82$\pm$0.74 &36.07$\pm$2.01 &20.73$\pm$1.50 &28.76$\pm$2.33 &20.92$\pm$2.04 &16.65$\pm$1.46
\\
TENT &26.66$\pm$0.57 &33.38$\pm$0.32 &41.84$\pm$0.59 &33.01$\pm$0.54 &29.17$\pm$0.20 &28.82$\pm$0.85 &16.35$\pm$0.76 &25.20$\pm$0.85 &23.45$\pm$0.87 &14.96$\pm$0.56 
\\
LAME &28.20$\pm$3.24 &30.90$\pm$1.77 &42.25$\pm$0.97 &27.20$\pm$2.17 &20.83$\pm$1.08 &35.89$\pm$2.53 &19.31$\pm$2.11 &30.01$\pm$4.10 &17.46$\pm$2.39 &14.24$\pm$2.02 

\\
T3A &41.53$\pm$0.93 &45.41$\pm$1.76 &51.11$\pm$1.18 &48.02$\pm$1.99 &40.09$\pm$1.53 &43.72$\pm$2.31 &39.68$\pm$2.25 &47.31$\pm$0.84 &45.93$\pm$1.14 &30.95$\pm$1.20 

\\
\midrule
TSA-TENT 
&\textbf{27.46$\pm$0.53} &\textbf{33.89$\pm$0.31} &\textbf{42.34$\pm$0.60} &\textbf{33.72$\pm$0.53} &\textbf{29.59$\pm$0.21} &\textbf{29.33$\pm$0.86} &\textbf{16.82$\pm$0.75} &\textbf{25.90$\pm$0.89} &\textbf{24.03$\pm$0.81} &\textbf{15.32$\pm$0.56}

\\
TSA-LAME 
&\textbf{28.64$\pm$3.24} &\textbf{31.01$\pm$1.91} &\textbf{42.35$\pm$0.86} &\textbf{27.22$\pm$1.87} &\textbf{22.27$\pm$0.52} &\textbf{39.64$\pm$2.08} &\textbf{22.96$\pm$1.62} &\textbf{33.94$\pm$3.14} &\textbf{20.08$\pm$1.93} &\textbf{17.36$\pm$1.91}

\\
TSA-T3A 
&\underline{\textbf{41.61$\pm$0.95}} &\underline{\textbf{46.13$\pm$1.83}} &\underline{\textbf{51.81$\pm$1.22}} &\underline{\textbf{48.87$\pm$1.43}} &\underline{\textbf{42.98$\pm$1.90}} &\underline{\textbf{45.48$\pm$2.69}} &\underline{\textbf{40.11$\pm$2.74}} &\underline{\textbf{47.45$\pm$0.42}} &\underline{\textbf{45.95$\pm$1.17}} &\underline{\textbf{32.03$\pm$1.30}}

\\
\bottomrule
\end{tabular}
}
\end{center}
\vspace{-2mm}
\end{table}

\begin{table*}[h!]
\vspace{-2mm}
\caption{GCN results on Pileup (f1-scores). \textbf{Bold} indicates improvements in comparison to the corresponding non-graph TTA baselines. \underline{Underline} indicates the best model.}
\label{table:gcn_pileup}
\begin{center}
\resizebox{0.9\textwidth}{!}{%
\small
\begin{tabular}{lcccccccc}
\toprule
\textbf{Method} & PU10$\rightarrow$30 & PU30$\rightarrow$10 & PU10$\rightarrow$50 & PU50$\rightarrow$10 & PU30$\rightarrow$140 & PU140$\rightarrow$30 &gg$\rightarrow$qq & qq$\rightarrow$gg \\
\midrule
ERM 
&8.30$\pm$1.79 &29.20$\pm$4.57  &2.27$\pm$0.70 &26.61$\pm$6.79 &0.33$\pm$0.13 &16.08$\pm$10.46 &23.71$\pm$2.49 &26.73$\pm$4.79 

\\
GTrans 
&11.97$\pm$2.67 &28.88$\pm$4.71 &3.90$\pm$1.26 &27.20$\pm$6.88 &0.91$\pm$0.29 &16.98$\pm$11.06 &24.62$\pm$2.38 &28.88$\pm$5.14  

\\
HomoTTT
&8.94$\pm$2.05 &31.21$\pm$4.79  &2.54$\pm$0.81 &27.59$\pm$6.18 &0.55$\pm$0.18 &16.77$\pm$11.23 &24.84$\pm$3.27 &27.48$\pm$4.95 

\\
SOGA 
&5.81$\pm$2.20 &35.21$\pm$4.00  &1.00$\pm$0.44 &27.19$\pm$6.76 &0.54$\pm$0.20 &19.17$\pm$11.32 &25.38$\pm$1.66 &28.06$\pm$5.06 

\\
ActMAD 
&7.20$\pm$2.22 &30.45$\pm$4.13 &1.15$\pm$0.41 &24.35$\pm$6.69 &0.31$\pm$0.14 &16.67$\pm$9.32 &23.77$\pm$1.78 &27.36$\pm$4.75  

\\
TENT 
&2.45$\pm$0.93 &35.99$\pm$3.12 &1.63$\pm$0.75 &26.20$\pm$7.51 &0.09$\pm$0.05 &13.84$\pm$8.89 &19.96$\pm$4.06 &29.58$\pm$4.46

\\
LAME 
&7.48$\pm$2.17 &30.91$\pm$5.58  &1.46$\pm$0.75 &24.97$\pm$9.02 &0.06$\pm$0.06 &15.30$\pm$10.74 &21.50$\pm$5.24 &25.72$\pm$6.51

\\
T3A 
&41.55$\pm$13.17 &62.80$\pm$2.48  &29.92$\pm$7.29 &56.32$\pm$14.68 &4.45$\pm$2.48 &33.26$\pm$4.35 &60.15$\pm$10.72 &62.77$\pm$8.69 

\\
\midrule
TSA-TENT 
&\textbf{2.96$\pm$1.12} &\textbf{38.45$\pm$3.83} &\textbf{2.17$\pm$0.82} &\textbf{26.85$\pm$7.89} &\textbf{0.24$\pm$0.16} &\textbf{15.20$\pm$10.17} &\textbf{24.22$\pm$3.88} &\textbf{33.47$\pm$4.43} 

\\
TSA-LAME 
&\textbf{12.71$\pm$2.29} &\textbf{32.37$\pm$8.02} &\textbf{4.14$\pm$1.48} &\textbf{27.14$\pm$10.60} &\textbf{0.38$\pm$0.19} &\textbf{16.53$\pm$12.07} &\textbf{29.29$\pm$2.32} &\textbf{27.72$\pm$4.22} 

\\ 
TSA-T3A
&\underline{\textbf{43.86$\pm$13.14}} &\underline{\textbf{64.00$\pm$1.56}} &\underline{\textbf{33.31$\pm$8.50}} &\underline{\textbf{60.22$\pm$8.38}} &\underline{\textbf{7.12$\pm$4.19}} &\underline{\textbf{35.72$\pm$4.56}} &\underline{\textbf{62.03$\pm$8.01}} &\underline{\textbf{64.02$\pm$6.87}} 

\\
\bottomrule
\end{tabular}
}
\end{center}
\vspace{-2mm}
\end{table*}

\begin{table*}[h!]
\captionsetup{justification=centering}  
\caption{GCN results on Arxiv and DBLP/ACM (accuracy). \textbf{Bold} indicates improvements in comparison to the corresponding non-graph TTA baselines. \underline{Underline} indicates the best model.}
\label{table:gcn_arxiv}
\begin{center}
\resizebox{0.9\textwidth}{!}{%
\small
\begin{tabular}{lcccccccc}
\toprule
& \multicolumn{2}{c}{1950-2007} &\multicolumn{2}{c}{1950-2009} &\multicolumn{2}{c}{1950-2011} &\multicolumn{2}{c}{DBLP \& ACM}\\
\textbf{Method} &2014-2016 &2016-2018 &2014-2016 &2016-2018 &2014-2016 &2016-2018 & D$\rightarrow$A & A$\rightarrow$D\\
\midrule
ERM 
&49.25$\pm$1.58 &50.64$\pm$3.11 &52.61$\pm$2.67 &51.92$\pm$4.12 &58.83$\pm$0.83 &57.73$\pm$1.27 &62.56$\pm$3.32 &47.59$\pm$1.92 

\\
GTrans 
&49.10$\pm$1.68 &50.57$\pm$3.10 &52.50$\pm$2.71 &51.87$\pm$4.26 &58.72$\pm$0.89 &57.75$\pm$1.31 &62.03$\pm$2.31 &51.65$\pm$3.09

\\
HomoTTT
&46.22$\pm$1.54 &46.11$\pm$2.79 &48.96$\pm$3.13 &46.83$\pm$5.00 &55.10$\pm$1.63 &53.18$\pm$2.52 &63.21$\pm$3.31 &48.45$\pm$1.68

\\
SOGA 
&49.82$\pm$2.38 &52.36$\pm$2.54 &52.92$\pm$1.08 &\underline{54.45$\pm$1.15} &57.93$\pm$1.71 &\underline{58.66$\pm$2.02} &\underline{64.22$\pm$2.40} &53.29$\pm$2.08 

\\
ActMAD 
&49.24$\pm$1.57 &50.68$\pm$2.94 &52.71$\pm$2.64 &52.10$\pm$3.98 &58.83$\pm$0.86 &57.85$\pm$1.25 &62.68$\pm$3.36 &47.58$\pm$1.89

\\
TENT
&48.59$\pm$1.41 &49.90$\pm$1.15 &53.39$\pm$1.82 &53.45$\pm$2.11 &58.72$\pm$0.91 &58.23$\pm$0.95 &60.80$\pm$2.51 &48.11$\pm$1.43

\\
LAME  
&49.77$\pm$2.50 &50.97$\pm$4.60 &53.35$\pm$2.72 &52.70$\pm$4.25 &58.85$\pm$0.57 &57.05$\pm$1.57 &62.51$\pm$3.33 &47.64$\pm$1.70

\\
T3A 
&43.37$\pm$1.38 &41.32$\pm$3.13 &49.95$\pm$3.76 &47.11$\pm$6.73 &55.56$\pm$1.16 &53.57$\pm$1.94 &62.60$\pm$3.65 &49.65$\pm$1.83

\\
\midrule
TSA-TENT 
&\textbf{49.63$\pm$1.26} &\textbf{50.78$\pm$0.74} &\textbf{53.78$\pm$1.57} &\textbf{53.67$\pm$1.85} &\textbf{58.90$\pm$0.94} &\textbf{58.27$\pm$0.95} &\textbf{61.31$\pm$2.39} &\textbf{53.14$\pm$2.01}
\\
TSA-LAME 
&\underline{\textbf{50.95$\pm$2.05}} &\underline{\textbf{53.04$\pm$2.86}} &\underline{\textbf{54.49$\pm$1.81}} &\textbf{54.13$\pm$3.35} &\underline{\textbf{59.26$\pm$0.70}} &\textbf{57.55$\pm$1.30} &\textbf{63.92$\pm$2.76} &\underline{\textbf{55.19$\pm$4.50}}
\\
TSA-T3A 
&\textbf{45.37$\pm$0.76} &\textbf{45.46$\pm$1.92} &\textbf{51.57$\pm$2.73} &\textbf{49.70$\pm$5.44} &\textbf{56.14$\pm$1.02} &\textbf{54.84$\pm$1.17} &\textbf{63.31$\pm$3.50} &\textbf{54.53$\pm$3.07}
\\
\bottomrule
\end{tabular}
}
\end{center}
\vspace{-2mm}
\end{table*}

\subsection{Results of More GNN Architectures}


\textbf{TSA is Model-Agonistic.} 
~\proj employs neighborhood alignment and SNR adjustment on edge weights and do not need any assumption on the GNN architecture such as hop-aggregation parameters.
We demonstrate that ~\proj is applicable to common GNN architectures such as GraphSAGE, GPRGNN, and GCN.
While different backbones yield varying ERM results, we observe that ~\proj consistently improves model performance.

\textbf{Comparison with Baselines.}  \proj exhibits strong outcomes on real-world datasets and with both GPRGNN and GCN backbones. In all cases, the best results are obtained by GTTA methods (\underline{underlined} across Table\ref{table:gprgnn_mag}–\ref{table:gcn_arxiv}), highlighting the importance of addressing structure shift in graph data.
For GPRGNN, ~\proj consistently outperforms GTrans and HomoTTT and surpasses SOGA by an average of 10.5\% and Matcha by an average of 19.5\%.
For GCN, ~\proj similarly outperforms GTrans and HomoTTT, and exceeds SOGA by an average of 13.1\%.
Furthermore, \proj shows prevailing gains over the corresponding non-graph TTA baselines (\textbf{bold} across Table\ref{table:gprgnn_mag}–\ref{table:gcn_arxiv}).
While Matcha sometimes demonstrates strong performance, we frequently observe unstable results that may even fall below ERM.
This drop is attributed to the reliance of their proposed PIC loss on minimizing intra-variance, which depends on the quality of node hidden representations in the target domain.
Under label shift, node representations can easily degrade, leading to instability in Matcha's performance.


\clearpage

\subsection{Sensitivity of $\gamma$ to Noise in Pseudo-Labels}

In this subsection, we investigate how sensitive $\gamma$ is to noise in pseudo-labels.
Specifically, we progressively add $x\%$ additional noise to the input features and evaluate the discrepancy between the estimated $\gamma$ and the ground-truth value calculated from the labels.
We report root mean squared error (RMSE).
The results are shown in Table \ref{tab:gamma_noise_sensitivity}.
Results indicate that the estimated $\gamma$ values are robust: the relative increase in error is much smaller than the injected input noise.
Even with $100\%$ additional noise, the increase in $\gamma$ estimation error is only about $2.6\%\!\sim\!8.3\%$ across methods.

\begin{table*}[h!]
  \centering
  \caption{Sensitivity of $\gamma$ estimation to pseudo-label noise on synthetic CSBMs. The percentages in parentheses denote the relative increase compared to the "No Add" setting.}
  \vspace{-2mm}
  \label{tab:gamma_noise_sensitivity}
  \begin{center}
  \resizebox{0.75\textwidth}{!}{%
    \small
    \begin{tabular}{lcccc}
      \toprule
      Method & No Add & Add 10\% & Add 50\% & Add 100\% \\
      \midrule
      TSA-TENT & 0.6938 & 0.6984 (\textbf{+0.66\%}) & 0.7159 (\textbf{+3.18\%}) & 0.7330 (\textbf{+5.64\%}) \\
      TSA-LAME & 2.4489 & 2.5033 (\textbf{+2.22\%}) & 2.6856 (\textbf{+9.66\%}) & 2.6516 (\textbf{+8.28\%}) \\
      TSA-T3A  & 1.4768 & 1.4791 (\textbf{+0.16\%}) & 1.4943 (\textbf{+1.18\%}) & 1.5155 (\textbf{+2.69\%}) \\
      \bottomrule
    \end{tabular}
  }
  \end{center}
\end{table*}

\subsection{TSA on the Heterophilic Graphs}

TSA is in principle applicable to heterophilic datasets because it makes no homophily-specific assumptions. 
We include an experiment adapting between homophilic and heterophilic CSBMs to demonstrate the model’s adaptability empirically. 
Specifically, we adapt a source model first trained on a heterophilic dataset to a homophilic dataset, and vice versa. 
We control neighborhood shifts to synthesize data with high heterophily. 
The results are shown in Table \ref{tab:hetero_homo_adapt}.
Across both directions, TSA consistently improves performance and attains the best accuracy. 
By contrast, SOGA relies on homophily assumptions and hence fails to adapt effectively.
Its accuracy decreases under both settings.

\begin{table*}[h!]
  \centering
  \caption{We report performance when adapting between heterophilic and homophilic synthetic CSBMs. A higher node homophily ratio \cite{pei2020geom} indicates that a larger fraction of each node’s neighbors share its label. }
  \vspace{-2mm}
  \label{tab:hetero_homo_adapt}
  \resizebox{0.9\textwidth}{!}{%
    \small
    \begin{tabular}{lccccccc}
      \toprule
      Homophily ratio & ERM & GTrans & HomoTTT & SOGA & TSA-TENT & TSA-LAME & TSA-T3A \\
      \midrule
      $0.041 \rightarrow 0.775$ &
      29.22$\pm$18.20 & 29.98$\pm$17.46 & 29.75$\pm$18.28 & 26.88$\pm$17.94 &
      \textbf{49.47$\pm$16.55} & 33.12$\pm$14.63 & 39.51$\pm$17.68 \\
      \midrule
      $0.775 \rightarrow 0.041$ &
      36.17$\pm$1.98 & 35.29$\pm$3.00 & 36.02$\pm$2.63 & 35.14$\pm$1.88 &
      \textbf{39.11$\pm$3.88} & 37.13$\pm$1.04 & 36.95$\pm$1.87 \\
      \bottomrule
    \end{tabular}
  }
\end{table*}

\section{DATASET DETAILS}

\label{app:dataset}

\subsection{Dataset Statistics} 
Here we summarize the statistics of the four real-world graph datasets in Table \ref{table:arxiv_datastats}, \ref{table:mag_datastats}, and \ref{table:pileup_datastats}.
The edges are counted twice in the edge list as they are undirected graphs. 

\begin{table}[h!]
\caption{Arxiv and ACM/DBLP dataset statistics}
\vspace{-2mm}
\begin{center}
\begin{tabular}{lccccccc}
\toprule
            & ACM          & DBLP           & 1950-2007         &1950-2009  &1950-2011  &1950-2016          &1950-2018 \\
\midrule
Nodes          & $7410$  & $5578$ & $4980$ &  $9410$  &$17401$  & $69499$ & $120740$ \\
Edges        & $11135    $ & $7341$ & $5849$    &$13179$   &    $30486$     & $232419$ & $615415$ \\
Features    & $7537    $ & $7537  $ & $128$   &    $128$  & $128$   &    $128$   &    $128$   \\
Classes        & $6    $ & $6  $ & $40$    &$40$  & $40$    &$40$     &    $40$ \\
\bottomrule
\label{table:arxiv_datastats}
\end{tabular}
\end{center}
\vskip -0.7cm
\end{table}


\begin{table}[h!]
\caption{MAG dataset statistics}
\vspace{-2mm}
\begin{center}
\begin{tabular}{lcccccc}
\toprule
            & US          & CN           & DE          &JP  & RU          &FR\\
\midrule
Nodes          & $132558$  & $101952$ & $43032$     &  $37498$  & $32833$     &  $29262$   \\
Edges        & $697450$ & $285561$ & $126683$    &$90944$    & $67994$     &  $78222$    \\
Features    & $128    $ & $128  $ & $128$   &    $128$  & $128$   &    $128$   \\
Classes        & $20$ & $20  $ & $20$    &$20$  & $20$    &$20$ \\
\bottomrule
\label{table:mag_datastats}
\end{tabular}
\end{center}
\vskip -0.7cm
\end{table}

\begin{table}[h!]
\caption{Pileup dataset statistics}
\vspace{-2mm}
\begin{center}
\begin{tabular}{lcccccc}
\toprule
            & gg-10          & qq-10           & gg-30           &gg-50  & gg-140          \\
\midrule
Nodes          & $18611$  & $17242$ & $41390$     & $60054$     &  $154750$     \\
Edges        & $ 53725 $ & $42769$ & $173392$      & $341930$    &$2081229$    \\
Features    & $28    $ & $28  $ & $28$    & $28$   &    $28$   \\
Classes        & $2    $ & $2  $ & $2$      & $2$    &$2$      \\
\bottomrule
\label{table:pileup_datastats}
\end{tabular}
\end{center}
\vskip -0.7cm
\end{table}

\subsection{Dataset Shift Statistics}
\label{app:shift_stats}
Below we provide two metrics indicating the neighborhood and label shift in real-world dataset.
We measure the CSS by computing the weighted average of the total variation distance between of the neighborhood distribution:
\begin{align*}
    \text{CSS}\coloneqq\frac{1}{2}\sum_{j\in\gY}\sum_{k\in\gY}\probt(Y_u=j)\abs{\probs(Y_v=k|Y_u=j,v\in \neighbor_u) - \probt(Y_v=k|Y_u=j,v\in \neighbor_u)}
\end{align*}

The label shift is measured as the total variation distance between the label distribution, which corresponds to the first term in Theorem \ref{theory:decomposeerror}.

\begin{align*}
    \text{Label Shift}\coloneqq\frac{1}{2}\sum_{j\in\gY}\abs{\probs(Y_u=j) - \probt(Y_u=j)}
\end{align*}

\begin{table}[h!]
\caption{MAG dataset shift metrics}
\vspace{-2mm}
\begin{center}
\begin{adjustbox}{width = 1\textwidth}
\begin{sc}
\begin{tabular}{lcccccccccc}
\toprule
  &   $US \rightarrow CN$ &  $US\rightarrow DE$  &     $US\rightarrow JP$    & $US\rightarrow RU$     & $US\rightarrow FR$ & $CN \rightarrow US$ &  $CN\rightarrow DE$  &     $CN\rightarrow JP$  & $CN\rightarrow RU$     & $CN\rightarrow FR$\\
\midrule
CSS        & $0.2137$ & $0.1902$ & $0.1476$    &$0.2855$    & $0.2238$     &  $0.1946$ & $0.2053$ &$0.1361$ &$0.2025$ & $0.2039$    \\
Label Shift        & $0.2734$ & $0.1498$ & $0.1699$    &$0.3856$    & $0.1706$     &  $0.2734$ & $0.2691$ &$0.1522$ &$0.2453$ & $0.2256$    \\

\bottomrule
\label{table:magstats}
\end{tabular}
\end{sc}
\end{adjustbox}
\end{center}
\vskip -0.7cm
\end{table}

\begin{table}[h!]
\caption{Pileup dataset shift metrics}
\vspace{-2mm}
\begin{center}
\begin{adjustbox}{width = 0.95\textwidth}
\begin{sc}
\begin{tabular}{lcccccccc}
\toprule
& \multicolumn{6}{c}{Pileup Conditions} & \multicolumn{2}{c}{Physical Processes} \\
 Domains  & $\text{PU}10 \rightarrow 30$    & $\text{PU}30 \rightarrow 10$ & PU$10\rightarrow 50$ & PU$50\rightarrow 10$ & PU$30\rightarrow 140$ & PU$140\rightarrow 30$ &$gg \rightarrow qq$      &     $qq \rightarrow gg$      \\
\midrule
CSS        & $0.1493$ & $0.1709$ & $0.2039$    &$0.2595$    & $0.1293$     &  $0.1635$  & $0.0214$     &  $0.0205$   \\
Label Shift       & $0.2425$ & $0.2425$ & $0.2982$    &$0.2982$    & $0.1377$     &  $0.1377$  & $0.0341$     &  $0.0348$  \\

\bottomrule
\label{table:hepstats}
\end{tabular}
\end{sc}
\end{adjustbox}
\end{center}
\vskip -0.7cm
\end{table}

\begin{table}[h!]
\caption{Real dataset shift metrics}
\vspace{-2mm}
\begin{center}
\begin{adjustbox}{width = 0.95\textwidth}
\begin{sc}
\begin{tabular}{lcccccccc}
\toprule
 & \multicolumn{2}{c}{1950-2007} & \multicolumn{2}{c}{1950-2009} & \multicolumn{2}{c}{1950-2011} & \multicolumn{2}{c}{DBLP/ACM}\\
 Domains  & $2014-2016$    & $2016-2018$ & $2014-2016$ & $2016-2018$ & $2014-2016$ & $2016-2018$   & $D\rightarrow A$ & $A\rightarrow D$\\
\midrule
CSS         
& $0.3005$ & $0.3450$ & $0.2583$    &$0.3120$    & $0.1833$     &  $0.2567$   & $0.1573$     &  $0.2227$   \\
Label Shift        
& $0.2938$ & $0.4396$ & $0.2990$    &$0.4552$    & $0.2853$     &  $0.4438$   & $0.3434$     &  $0.3435$ \\

\bottomrule
\label{table:realstats}
\end{tabular}
\end{sc}
\end{adjustbox}
\end{center}
\vskip -0.7cm
\end{table}

\newpage
\section{DETAILED EXPERIMENTAL SETUP}
\label{app:detailed experimental setup}

\subsection{Datasets Setup}
For Arxiv, MAG, and DBLP/ACM we follow the setting of \cite{liu2024pairwise}. Hence we discuss the setting of synthetic dataset and pileup in the following:

\paragraph{Synthetic Dataset.}
The synthetic dataset is generated by the contextual stochastic block model (CSBM) \cite{deshpande2018contextual}. 
The generated graph contains 6000 nodes and 3 classes.
We alter the edge connection probability matrix $\mathbf{B}$ and the label ratio $\prob_Y$
Specifically, the edge connection probability matrix $\mathbf{B}$ is a symmetric matrix given by
\[
\mathbf{B} = 
\begin{bmatrix}
p & q & q \\
q & p & q \\
q & q & p
\end{bmatrix}
\]
where $p$ and $q$ indicate the intra- and the inter-class edge probability respectively.
We assume the node features are sampled from a Gaussian distribution $X_u \sim \gN(\bm{\mu}_u, \sigma\mI)$ where we set $\sigma=0.3$ as well as $\bm{\mu}_u=[1,0,0]$ for class 0, $\bm{\mu}_u=[0,1,0]$ for class 1, and $\bm{\mu}_u=[0,0,1]$ for class 2.

\begin{itemize}
    \item The source graph for setting 1-6 has $\prob_Y = \left[ 0.1,0.3,0.6 \right]$, with $p = 0.01$ and $q = 0.0025$.
    
    \item For neighborhood shift we fix the same class ratio but change the connection probability:
    \begin{itemize}
        \item Condition 1: $p = 0.005$, $q = 0.00375$.
        \item Condition 2: $p = 0.005$, $q = 0.005$.
    \end{itemize}

    \item For condition 3 and 4 we  introduce degree shift for SNR discrepancy.
    \begin{itemize}
        \item Condition 3: $p = \frac{0.005}{2}$, $q = \frac{0.00375}{2}$.
        \item Condition 4: $p = \frac{0.005}{2}$, $q = \frac{0.005}{2}$.
    \end{itemize}

    \item For condition 5 and 6 we investigate structure shift under training from give training is from the imbalanced source label ratio.
    \begin{itemize}
        \item Condition 5: $\mathbb{P}_Y =  \left[ 1/3, 1/3, 1/3\right]$ $p = \frac{0.005}{2}$, $q = \frac{0.00375}{2}$.
        \item Condition 6: $\mathbb{P}_Y =  \left[ 1/3, 1/3, 1/3\right]$ $p = \frac{0.005}{2}$, $q = \frac{0.005}{2}$.
    \end{itemize}

    \item The source graph for Condition 7-8 has $\prob_Y = \left[ 1/3, 1/3, 1/3\right]$, with $p = 0.01$ and $q = 0.0025$.

    \item For condition 7 and 8 we investigate structure shift under training from balanced source ratio. The $p$ and $q$ in Condition 7 and 8 is the same as in Condition 5 and 6, respectively.
    \begin{itemize}
        \item Condition 7: $\mathbb{P}_Y = [0.1, 0.3, 0.6]$, $p = \frac{0.005}{2}$, $q = \frac{0.00375}{2}$.
        \item Condition 8: $\mathbb{P}_Y = [0.1, 0.3, 0.6]$, $p = \frac{0.005}{2}$, $q = \frac{0.005}{2}$.
    \end{itemize}
\end{itemize}

\paragraph{Pileup.} Under the study of different PU levels we use the data from signal $gg$. Compared to the Pileup datasets used in the \cite{liu2024pairwise}, we explicitly label all types of particles to better align the neighborhood distribution. Note that the charged particle can achieve all most perfect recall as their label information is encoded in their feature, hence the four-class classification still reduces into a binary classification between LC neutral particles and OC neutral particles.
For the study of different data generation process, we use PU10 from signal $gg\rightarrow qq$ and $qq \rightarrow gg$.

\subsection{Pretraining Setup}
\paragraph{Model Architecture and Pretraining.} 
We use GraphSAGE as the model backbone, with a 3-layer mean pooling GNN as the encoder and a 2-layer MLP with a batch normalization layer as the classifier.
The hidden dimension of the encoder is set to 300 for Arxiv and MAG, 50 for Pileup, 128 for DBLP/ACM, and 20 for the synthetic datasets.
The hidden dimension of the classifier is set to 300 for Arxiv and MAG, 50 for Pileup, 40 for DBLP/ACM, and 20 for the synthetic datasets.
For the GPRGNN backbone, we follow the same configuration but increase the number of encoder layers to 5.
For the GCN backbone, we use a 3-layer GCN as the feature encoder, followed by the same 2-layer MLP classifier as in the other backbones.
All models are trained for 400 epochs with the learning rate set to 0.003. 
We use Adam~\cite{kingma2014adam} as the optimizer for all models, and apply a learning rate scheduler with a decay rate of 0.9.
All experiments are repeated 5 times under different initializations and data splits to ensure consistency.

\paragraph{Hardware and Frameworks.} 
All experiments are conducted on a server with a single NVIDIA RTX 6000 GPU and an AMD EPYC 7763 64-Core Processor. We use PyTorch 2.2.2 \cite{paszke2017automatic} with CUDA 12.1 as the framework. For graph neural networks, we use PyTorch Geometric 2.5.2 \cite{fey2019fast}.

\subsection{Evaluation and Metric}
The source graph is splitted into train/val/test 60/20/20 during the pretraining stage. The target graph is splitted into labeled/unlabeled 3/97 during test time. 
We use the the labeled nodes in the target graph to do hyperparameter tuning and select the model with the optimal validation score.
We follow the metrics in~\cite{liu2024pairwise} for evaluation. MAG, Arxiv, DBLP/ACM, and synthetic datasets are evaluated with accuracy.
For the MAG datasets, only the top 19 classes are evaluated.
For the Pileup datasets, we evaluate the performance with f1 score.

\subsection{Hyperparameter Tuning}
Below we introduce the search space of the hyperparameters. LAME is a parameter-free approach so we do not tune it.

\begin{itemize}
\item TSA introduces three hyperparameters (1) the learning rate $lr$ for optimizing $\alpha$, (2) the ratio $\rho_1$ for reliable assignment of $\gamma$ based on entropy $H(\hat{y})\leq \rho_1 \cdot \ln ( |\gY|)$, and (3) the ratio $\rho_2$ for filtering out unreliable hard pseudo-labels in Eq.~\ref{eq:loss} based on entropy $H(\hat{y})\leq \rho_2 \cdot \ln ( |\gY|)$.
We select $lr$ from $\{0.001, 0.01, 0.05, 0.1 \}$. 
For real world datasets, we select $\rho_1$ from $\{0, 0.01, 0.1, 0.5\}$, and $\rho_2$ from $\{0.1, 1.0\}$.
For synthetic datasets, we set $\rho_1$ and $\rho_2$ to $1.0$.
We present the results updated by one epoch.
We observe that given the same domain shifts, the optimal hyperparameters of $\rho_1$ may differ due to different performance of the boundary refinement approaches (TENT, T3A, and LAME). 

\item Matcha introduces two hyperparameters learning rate $lr$ and the number of epochs $T$. Based on their hyperparameter study, We select $lr$ from $\{0.1, 1, 5, 10 \}$ and $T$ from $\{2, 10\}$.

\item GTrans introduces four hyperparameters learning rate of feature adaptation $lr_f$, learning rate of structure adaptation $lr_A$, the number of epochs $T$, and the budget to update. We select $lr_f$ from $\{0.001, 0.01 \}$, $lr_A=\{0.01,0.1,0.5\}$ from $\{0.01, 0.05 \}$, $T$ from $\{5, 10\}$, and budget from $\{0.01, 0.05, 0.1 \}$.

\item HomoTTT introduces two hyperparameter $lr$ and $k0$. The hyperparameter $k0$ is a scaling factor for the model selection score. We present the results updated by one epoch. We select $lr$ from $\{0.001, 0.01 \}$ and $k0$ from $\{0.001, 0.01 \}$.

\item SOGA introduces one hyperparameter $lr$. We present the results updated by one epoch and select $lr$ from $\{0.001, 0.01 \}$.

\item TENT introduces one hyperparameter $lr$.  We present the results updated by one epoch and select $lr$ from $\{0.001, 0.01, 0.05\}$. Note that TENT is applied only to the classifier in our implementation, as GNNs typically do not include batch normalization layers.

\item T3A introduces one hyperparameter $M$ for deciding the number of supports to restore. We select $M$ from $\{5, 20, 50, 100\}$

\item ActMAD introduces one hyperparameter $lr$. We present the results updated by one epoch and select $lr$ from $\{0.001, 0.01 \}$. We adapt ActMAD to GNNs by aligning the outputs of each GNN layer before the classifier.
\end{itemize}

\end{document}